\newcommand{\Eall}{\Ecal_{all}}
\newcommand{\Eava}{\Ecal_{avail}}
\newcommand{\R}{\mathbb{R}} 
\newcommand*{\argmin}{\mathop{\mathrm{argmin}}}
\newcommand*{\argmax}{\mathop{\mathrm{argmax}}}
\newcommand{\wbf}{\mathbf{w}}
\newcommand{\Ecal}{\mathcal{E}}
\newcommand{\Fcal}{\mathcal{F}}
\newcommand{\Gcal}{\mathcal{G}}
\newcommand{\Ical}{\mathcal{I}}
\newcommand{\Lcal}{\mathcal{L}}
\newcommand{\Mcal}{\mathcal{M}}
\newcommand{\Ncal}{\mathcal{N}}
\newcommand{\Rcal}{\mathcal{R}}
\newcommand{\Scal}{\mathcal{S}}
\newcommand{\Vcal}{\mathcal{V}}
\newcommand{\Xcal}{\mathcal{X}}
\newcommand{\Ycal}{\mathcal{Y}}
\newcommand{\Ebb}{\mathbb{E}}
\newcommand{\Ibb}{\mathbb{I}}
\newcommand{\Pbb}{\mathbb{P}}
\newcommand{\Rbb}{\mathbb{R}}
\newcommand{\BlackBox}{\rule{1.5ex}{1.5ex}}  
\def\QED{~\rule[-1pt]{5pt}{5pt}\par\medskip}
\newenvironment{proof}{\par\noindent{\em Proof:\ }}{\hfill\BlackBox\\[.0mm]}
\newtheorem{theorem}{Theorem}[section]
\newtheorem{lemma}{Lemma}[section]
\newtheorem{definition}{Definition}[section]
\newcommand{\rbr}[1]{\left(#1\right)}
\newcommand{\err}{\mathrm{err}}
\newcommand{\benr}{\begin{eqnarray}}
\newcommand{\eenr}{\end{eqnarray}}
\newcommand{\benrr}{\begin{eqnarray*}}
\newcommand{\eenrr}{\end{eqnarray*}}
\newcommand{\ben}{\begin{equation}}
\newcommand{\een}{\end{equation}}
\newcommand{\benn}{\begin{equation*}}
\newcommand{\eenn}{\end{equation*}}
\numberwithin{equation}{section}
\title{\textbf{Towards a Theoretical Framework of Out-of-Distribution Generalization}}
\author{Haotian Ye\footnote{Peking University; e-mail:\,\, \texttt{haotianye@pkue.edu.cn}},
Chuanlong Xie\footnote{Huawei Noah's Ark Lab; e-mail: \texttt{xie.chuanlong@huawei.com}}, 
Tianle Cai\footnote{Peking University; e-mail: \texttt{caitianle1998@pku.edu.cn}}, 
Ruichen Li\footnote{Peking University; e-mail: \texttt{xk-lrc@pku.edu.cn}},
Zhenguo Li\footnote{Huawei Noah's Ark Lab; e-mail: \texttt{Li.Zhenguo@huawei.com}}, 
Liwei Wang\footnote{Peking University; e-mail: \texttt{wanglw@cis.pku.edu.cn}}
}
\begin{document}
\maketitle

\begin{abstract}

Generalization to out-of-distribution (OOD) data is one of the central problems in modern machine learning. Recently, there is a surge of attempts to propose algorithms that mainly build upon the idea of extracting invariant features. Although intuitively reasonable, theoretical understanding of what kind of invariance can guarantee OOD generalization is still limited,
and generalization to arbitrary out-of-distribution is clearly impossible. In this work, we take the first step towards rigorous and quantitative definitions of 1) what is OOD; and 2) what does it mean by saying an OOD problem is learnable. We also introduce a new concept of expansion function, which characterizes to what extent the variance is amplified in the test domains over the training domains, and therefore give a quantitative meaning of invariant features. Based on these, we prove OOD generalization error bounds. It turns out that OOD generalization largely depends on the expansion function. As recently pointed out by \cite{gulrajani2020search}, any OOD learning algorithm without a model selection module is incomplete. Our theory naturally induces a model selection criterion. Extensive experiments on benchmark OOD datasets demonstrate that our model selection criterion has a significant advantage over baselines.

\end{abstract}

\tableofcontents

\section{Introduction}\label{sec:intro}

One of the most fundamental assumptions of classic supervised learning is the ``\textit{i.i.d.} assumption'', which states that the training and the test data are independent and identically distributed. However, this assumption can be easily violated in a reality \cite{Beery_2018_ECCV,ben2010theory,bengio2019meta,degrave2020ai,moreno2012unifying,recht2019imagenet,taori2020measuring} where the test data usually have a different distribution than the training data. This motivates the research on the out-of-distribution (OOD) generalization, or domain generalization problem, which assumes access only to data drawn from a set $\Eava$ of available domains during training, and the goal is to generalize to a larger domain set $\Eall$ including \emph{unseen} domains.

To generalize to OOD data, most existing algorithms attempt to learn features that are \emph{invariant} to a certain extent across training domains in the hope that such invariance also holds in unseen domains. For example, distributional matching-based methods \cite{ganin2016domain,li2018domain,sun2016deep} 
seek to learn features that have the same distribution across different domains; IRM~\cite{arjovsky2019invariant} and its variants \cite{ahuja2020invariant,koyama2020out,krueger2020out} learn feature representations such that the optimal linear classifier on top of the representation matches across domains. 

Though the idea of learning invariant features is intuitively reasonable, there is only limited theoretical understanding of what kind of invariance can guarantee OOD generalization. Clearly, generalization to an arbitrary out-of-distribution domain is impossible and in practice, the features can hardly be absolutely invariant from $\Eava$ to $\Eall$ unless all the domains are identical. So it is necessary to first formulate what OOD data can be generalized to, or, what is the relation between the available training domain set $\Eava$ and the entire domain set $\Eall$. Meanwhile, to what extent the invariance of features on $\Eava$ can be preserved in $\Eall$ should be rigorously characterized.

In this paper, we take the first step towards a general OOD framework by quantitatively formalizing the relationship between $\Eava$ and $\Eall$ in terms of the distributions of \emph{features} and provide OOD generalization guarantees based on our quantification of the difficulty of OOD generalization problem. Specifically, we first rigorously formulate the intuition of invariant features used in previous works by introducing the ``variation'' and ``informativeness'' (Definition~\ref{def_invariance} and \ref{def_informativeness}) of each feature. Our theoretical insight can then be informally stated as: for learnable OOD problems, if a feature is informative for the classification task as well as invariant over $\Eava$, then it is still invariant over $\Eall$. In other words, invariance of informative features in $\Eava$ can be preserved in $\Eall$. We further introduce a class of functions, dubbed expansion function (Definition~\ref{def_expan}), to quantitatively characterize to what extent the variance of features on $\Eava$ is amplified on $\Eall$. 

Based on our formulation, we derive theoretical guarantees on the OOD generalization error, i.e., the gap of largest error between the domain in $\Eava$ and domain in $\Eall$. Specifically, we prove the upper and lower bound of OOD generalization error in terms of the expansion function and the variation of learned features over $\Eava$. 
Our results theoretically confirm that 1) the expansion function can reflect the difficulty of OOD generalization problem, i.e., problems with more rapidly increasing expansion functions are harder and have worse generalization guarantees; 2) the generalization error gap can tend to zero when the variation of learned features tend to zero, so minimizing the variation in $\Eava$ can reduce the generalization error.

As pointed out by \cite{gulrajani2020search}, any OOD algorithm without a specified model selection criterion is not complete. Since $\Eall$ is unseen, hyper-parameters can only be chosen according to $\Eava$. Previous selection methods mainly focus on validation accuracy over $\Eava$, which is only a biased metric of OOD performance. On the contrary, a promising model selection method should instead be predictive of OOD performance. Inspired by our bounds, we propose a model selection method to select models with high validation accuracy and low variation, which corresponds to the upper bound of OOD error. The introduction of a model's variation relieves the problem of classic selection methods, in which models that overfit $\Eava$ tend to be selected. Experimental results show that our method can outperform baselines and select models with higher OOD accuracy.

\paragraph{Contributions.} We summarize our major contributions here:
\begin{itemize}
    \item We introduce a quantitative and rigorous formulation of OOD generalization problem that characterizes the relation of invariance over the training domain set $\Eava$ and test domain set $\Eall$. The core quantity in our characterization, the expansion function, determines the difficulty of an OOD generalization problem.
    \item We prove novel OOD generalization error bounds based on our formulation. The upper and lower bounds together indicate that the expansion function well characterizes the OOD generalization ability of features with different levels of variation.
    \item We design a model selection criterion that is inspired by our generalization bounds. Our criterion takes both the performance on training domains and the variation of models into consideration and is predictive of OOD performance according to our bounds. Experimental results demonstrate our selection criterion can choose models with higher OOD accuracy.
\end{itemize}

The rest of the paper is organized as follows: Section~\ref{sec_preli} is our preliminary. 
In Section~\ref{sec_formalization}, we give our theoretical formulation.
Section~\ref{section_bound} gives our generalization bound. 
We propose our model selection method in Section~\ref{sec_selection}.
In Section~\ref{sec_experiment} we conduct experiments on expansion function and model selection.
We review more related works in Section~\ref{related_work} and conclude our work in Section~\ref{sec_conclusion}.

\section{Preliminary}\label{sec_preli}

Throughout the paper, we consider a multi-class classification task $\Xcal \to \Ycal = \{1, \ldots,K\}$.\footnote{Note that our framework can be generalized to other kinds of problems easily.} 
Let $\Eall$ be the domain set we want to generalize to, and
$\Eava \subseteq \Eall$ be the available domain set, i.e., all domains we have during the training procedure. 
We denote $\rbr{X^e, Y^e}$ to be the input-label pair drawn from the data distribution of domain $e$.
The OOD generalization goal is to find a classifier $f^*$ that minimizes the \emph{worst-domain} loss on $\Eall$:
\ben\label{eq31}
f^* = \argmin_{f \in \Fcal} \Lcal (\Eall,f),\ 
\Lcal(\Ecal,f) \triangleq \max_{e \in \Ecal} \Ebb\big[\ell \big(f(X^e), Y^e\big) \big]
\een
where $\Fcal : \Xcal \to \Rbb^K$ is the the hypothetical space and $\ell(\cdot,\cdot)$ is a loss function. Similar to previous works~\cite{arjovsky2019invariant,creager2021exchanging,jin2021enforcing,krueger2020out}, we assume that $f$ can be decomposed into $g \circ  h$, where $g \in \Gcal: \Rbb^d \to \R^K$ is the top classifier and $h :\Xcal \to \Rbb^d $ is a $d$-dimensional feature extractor, i.e., 
\benrr
h(x) = (\phi_1(x), \phi_2(x), \ldots, \phi_d(x))^\top, \quad \phi_i \in \Phi.
\eenrr
Here $\Phi$ is the set of scalar feature maps which map $\Xcal$ to $\Rbb$ and $d$ is fixed. We will call each $\phi\in\Phi$ a feature for simplicity.
Given a domain $e$, we denote the $d$-dimensional random vector $h(X^e)$ as $h^e$, one-dimensional feature $\phi(X^e)$ as $\phi^e$, and the conditional distribution of $h^e,\phi^e$ given $Y^e=y$ as $\Pbb(h^e|y),\Pbb(\phi^e|y).$
For simplicity, we assume the data distribution is balanced in every domain, i.e., $P(Y^e = y) = \frac 1 K,\forall y\in\Ycal,e\in\Eall$.
Our framework can be easily extended to the case where the balanced assumption is removed, with an additional term corresponding to the imbalance adding to the generalization bounds.

\section{Framework of OOD Generalization Problem}\label{sec_formalization}

The main challenge of formalizing the OOD generalization problem is to mathematically describe the connection between $\Eava$ and $\Eall$ and how generalization depends on this relation. 
Towards this goal, we introduce several quantities to characterize the relation of \emph{feature distributions} over different domains and bridge $\Eava$ and $\Eall$ by expansion function (Definition~\ref{def_expan}) over the quantities we have introduced.
Our framework is motivated by the understanding that, in an OOD generalization task, certain ``property'' of ``good'' features in $\Eava$ should be ``preserved'' in $\Eall$ (the reason is described in Section~\ref{sec:intro}). 
In Section~\ref{sec:detailed_formalization}, we will go into details on what we mean by ``property'' (variation, Definition~\ref{def_invariance}), ``good'' (informativeness, Definition~\ref{def_informativeness}), and ``preserved'' (measured by expansion function).
In Section~\ref{sec_assumption_example}, we further illustrate the key concepts in our framework by a real-world OOD problem.

\subsection{Formalizing OOD Problem by Quantifying Feature Distribution}
\label{sec:detailed_formalization}

We first introduce the concepts ``variation" and ``informativeness" of a feature $\phi$. 
The first one is what we expect to be preserved in $\Eall$ and the second one characterizes what features will be considered.
Specifically, let $\rho(\mathbb P,\mathbb Q)$ be a symmetric ``distance'' of two distributions. Note that $\rho$ can have many choices, like $L_2$ Distance, Total Variation and symmetric KL-divergence, etc.
The variation and informativeness are defined as follows:
\begin{definition}[Variation]\label{def_invariance}
The variation of feature $\phi(\cdot)$ across a domain set $\Ecal$ is
\begin{align}
\Vcal_{\rho}(\phi, \Ecal) = \max_{y\in\Ycal} \sup_{e, e'\in\Ecal} \rho \big(\Pbb(\phi^e|y), \Pbb(\phi^{e'}|y) \big).
\end{align}
A feature $\phi(\cdot)$ is $\varepsilon$-invariant across $\Ecal$, if $\varepsilon \geq \Vcal(\phi, \Ecal)$ (We omit the subscript $\rho$ in case of no ambiguity).
\end{definition}
\begin{definition}[Informativeness]\label{def_informativeness}
The informativeness of feature $\phi(\cdot)$ across a domain set $\Ecal$ is
\benr
\Ical_\rho(\phi, \Ecal) = 
\frac{1}{K(K-1)}\sum_{\substack{y\neq y'\\ y,y'\in \Ycal}}
\min_{e\in\Ecal} \rho \big( \Pbb(\phi^e|y) , \Pbb(\phi^e|y')\big).
\eenr
A feature $\phi(\cdot)$ is $\delta$-informative across $\Ecal$, if $\delta \leq \Ical(\phi, \Ecal).$
\end{definition} 

The variation $\Vcal(\phi, \Ecal)$ measures the stability of $\phi(\cdot)$ over the domains in $\Ecal$ and the informativeness $\Ical(\phi, \Ecal)$ captures the ability of $\phi(\cdot)$ to distinguish different labels.
We would like to highlight that the variation and informativeness are defined on each one-dimensional feature $\phi(\cdot)$.
Unlike previous distance between distributions defined in $d$-dimensional space, our definitions are more reasonable and practical, since it can be easily calculated and analyzed.

We are now ready to introduce the core quantity for connecting $\Eava$ and $\Eall$. Our motivation, as elaborated in the introduction section, is that, if a feature is informative for the classification task and invariant over $\Eava$,
then to enable OOD generalization from $\Eava$ to $\Eall$, it should be still invariant over $\Eall$. So the relation between $\Vcal(\phi, \Eava)$ and $\Vcal(\phi, \Eall)$ of an informative feature captures the feasibility and difficulty of OOD generalization. To quantitatively measure this relation, we define the following function class:

\begin{definition}[Expansion Function] \label{def_expan}
We say a function $s:\mathbb R^+ \cup \{0\} \to \mathbb R^+ \cup \{0, +\infty\}$ is an expansion function, iff the following properties hold: 
1) $s(\cdot)$ is monotonically increasing and $s(x)\geq x,\forall x\geq0$; 2) $\lim_{x\to 0^+} s(x) = s(0) = 0$.
\end{definition}

This function class gives a full characterization of how the variation between $\Eava$ and $\Eall$ is related. 
Based on this function class, we can introduce our formulation of the learnability of OOD generalization as follows:

\begin{definition}[Learnability]\label{def_learn}
Let $\Phi$ be the feature space and $\rho$ be a distribution distance. We say an OOD generalization problem from $\Ecal_{avail}$ to $\Ecal_{all}$ is \emph{learnable} if there exists an expansion function $s(\cdot)$ and $\delta\ge 0$, such that: for all $\phi\in \Phi$ satisfying $\Ical_\rho(\phi,\Eava) \geq \delta$, we have $s(\Vcal_\rho(\phi, \Ecal_{avail}))\geq \Vcal_\rho(\phi, \Ecal_{all})$. If such $s(\cdot)$ and $\delta$ exist, we further call this problem $(s(\cdot), \delta)$-learnable. If an OOD generalization problem is not learnable, we call it unlearnable.
\end{definition}
To understand the intuition and rationality of our formulation, several discussions are in order.
\paragraph{Properties of the expansion function.} 
In Definition~\ref{def_expan}, we highlight two properties of the expansion function. The first property comes naturally from the monotonicity properties of variation: any $\varepsilon_1$-invariant feature is also $\varepsilon_2$-invariant for $\varepsilon_2 \geq \varepsilon_1$; and  $\Vcal(\phi, \Ecal_1) \leq \Vcal(\phi, \Ecal_2)$ for any $\Ecal_1 \subseteq \Ecal_2$. The monotonicity also implies that larger $\Eall$ will induce larger $s(\cdot)$\footnote{When we talk about the scale of $s(\cdot)$, e.g. it is larger / smaller, we mean the comparison of two expansion function, rather than the comparison along a function. 
} 
and it is also harder to be generalized to. From this view, we can see that the scale of $s(\cdot)$ can reflect the difficulty of OOD generalization.
The second property is more crucial since it formulates the intuition that if an informative feature is \emph{almost} invariant in $\Eava$, it should remain invariant in $\Eall$. 
Without this assumption, OOD generalization can never be guaranteed because we cannot predict whether an invariant and informative feature in $\Eava$ will vary severely in \emph{unseen} $\Eall$.

\paragraph{Necessity of informativeness.} 
We include a seemingly redundant quantity informativeness in the definition of learnability. However, this term is necessary because only informative features are responsible for the performance of classification. Non-informative but invariant features over $\Eava$ may only capture some noise that is irrelevant to the classification problem, and we shall not expect the noise to be invariant over $\Eall$. Moreover, we show in Figure~\ref{fig_OH_expan_func} that in practice, 
many invariant but useless features in $\Eava$ vary a lot in $\Eall$, and adding the constraint of informativeness makes the expansion function reasonable.
In addition, there are multiple choices of $(s(\cdot),\delta)$ to make an OOD generalization problem learnable: larger $\delta$ will filter out more features, and so $s(\cdot)$ can be smaller (flatter).
This multiplicity will result in a trade-off between $s(\cdot)$ and $\delta$, which will be discussed in Section~\ref{sec_assumption_example}.

\paragraph{Two extreme cases: \textit{i.i.d.} \& unlearnable.} 
To better understand the concept of learnability, we consider two extreme cases. 
(1) The first example is when all data from different $e\in\Eall$ are \emph{identically} distributed, i.e., the classic supervised learning setting. This problem is $(s(\cdot),0)$-learnable with $s(x)=x$, implying no extra difficulty in OOD generalization. 
(2) As an example of unlearnable, consider the following case (modified from Colored MNIST \cite{arjovsky2019invariant}):   
For $e \in \Eava$, images with label $0$ always has a red background while images with label $1$ has a blue background. 
For $e \in \Eall \setminus \Eava$, this relationship is entirely inverse.
Since data from different $e \in \Eava$ are identically distributed but different from other $e\in\Eall$, no expansion function can make it learnable, i.e., it is OOD-unlearnable.
The unlearnability of this case also coincides with our intuition: 
Without prior knowledge, it is not clear from merely the training data, whether the task is to distinguish  digit 0 from 1, or to distinguish color red from blue.
As a result, generalization to $\Eall$ cannot be guaranteed.

\section{Generalization Bound}\label{section_bound}

In this section, we consider an OOD generalization problem from $\Eava$ to $\Eall$,
and our goal is to analyze the OOD generalization error of classifier $f=g\circ h$ defined by
\benrr
\err(f) = \Lcal(\Eall,f) - \Lcal(\Eava,f),
\eenrr
where we assume the loss function $l(\cdot,\cdot)$ is bounded by $[0,C].$
We prove two upper bounds (\ref{general ood bound}, \ref{bound_linearg}) as well as a lower bound (\ref{lower bound}) for $\text{err}(f)$ based on our formulation. 
Our bounds together provide a complete characterization of the difficulty of OOD generalization.
Since we expect that an invariant classifier can generalize to unseen domains, we hope to bound $\text{err}(f)$ in terms of the certain variation of $f$.
To this end, we define the variation and informativeness of $f$ in terms of its features, i.e.,
\benrr
\Vcal^{\text{sup}}(h, \Eava) &\triangleq &\sup_{\beta \in \mathcal S^{d-1}} \Vcal(\beta^\top h, \Eava),\\
\Ical^{\text{inf}}(h, \Eava) &\triangleq &\inf_{\beta \in \mathcal S^{d-1}} \Ical(\beta^\top h, 
\Eava),
\eenrr
where $(\beta^\top h)(x) = \beta^\top h(x)$ is a feature and $\Scal^{d-1} = \{\beta \in \mathbb R^d: \|\beta\|_2 = 1\}$ is the unit $(d-1)$-sphere.

\paragraph{Necessity of using supremum over linear combination.} One seemingly plausible definition of the variation of a classifier $f$ can be the supremum over all $\Vcal(\phi_i,\Eava),i\in[d]$. 
However, as is shown in Appendix~1, it is possible that two high dimensional joint distributions have close marginal distribution in each dimension, while they do not overlap.
In other words, there exist cases where $\Vcal(\phi_i,\Eall) = 0,\forall i\in[d]$ but after applying the top model $g$ over $\phi_i$'s, the distribution varies a lot in $\Eava$.
Our definition comes from the simple idea that the class of top model $\mathcal G$ is at least a linear space, so we should at least consider the variation of every (normalized) linear combination of $h(\cdot)$.
With this, we can guarantee the joint distribution distance is still small. 

\begin{theorem}[Main Theorem]\label{general ood bound}
Suppose we have learned a classifier $f(x) = g(h(x))$ such that $\forall e \in \Eall$ and $\forall y \in \Ycal$, $p_{h^e|Y^e} (h|y) \in L^2(\Rbb^d).$
Denote the characteristic function of random variable $h^e|Y^e$ as $\hat p_{h^e|Y^e}(t|y) = \mathbb E [\exp\{i \langle t,h^e \rangle\}|Y^e=y].$ 
Assume the hypothetical space $\mathcal F$ satisfies the following regularity conditions that $\exists \alpha,M_1,M_2 >0, \forall f \in \Fcal, \forall e \in \Eall, y\in\Ycal$,
    \begin{align}\label{concentration_assumption}
    \int_{h\in\R^d} p_{h^e|Y^e}(h|y) |h|^\alpha \mathrm d h\leq M_1 \quad \text{and} \quad \int_{t\in\R^d} |\hat p_{h^e|Y^e}(t|y)| |t|^\alpha \mathrm dt \leq M_2.
    \end{align}
If $(\Eava,\Eall)$ is $\big(s(\cdot),\Ical^{\text{inf}}(h,\Eava)\big)$-learnable under $\Phi$ with Total Variation $\rho$\footnote{For two distribution $\mathbb P,\mathbb Q$ with probability density function $p,q$, $\rho(\mathbb P,\mathbb Q) = \frac12\int_x |p(x) - q(x)|\mathrm dx$.}, then we have
\benr\label{main}
\err(f) \leq O\Big(s\big(\Vcal^{\text{sup}}_\rho (h,\Eava)\big)^{\frac{\alpha^2}{(\alpha+d)^2}}\Big).
\eenr
Here $\rho$ is total variation distance, and $O(\cdot)$ depends on $d,C,\alpha,M_1,M_2$.
\end{theorem}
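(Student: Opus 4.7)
My plan is to reduce the OOD gap to a total-variation distance between joint conditional distributions of $h^e$, and then to attack that TV distance by Fourier analysis driven by the learnability hypothesis. Since $\Eava\subseteq\Eall$, picking any $e'\in\Eava$ and letting $e^{*}\in\Eall$ achieve $\max_{e\in\Eall}\mathbb E[\ell(f(X^e),Y^e)]$ gives $\err(f)\le|\mathbb E[\ell(f(X^{e^{*}}),Y^{e^{*}})]-\mathbb E[\ell(f(X^{e'}),Y^{e'})]|$. Conditioning on the label, using balanced classes and $\ell\in[0,C]$, one obtains
\begin{equation*}
\err(f)\;\le\; C\cdot\max_{e,e'\in\Eall,\,y\in\Ycal}\;\tv\bigl(\Pbb(h^e\mid y),\Pbb(h^{e'}\mid y)\bigr).
\end{equation*}
It thus suffices to show this maximum TV is $O(v^{\alpha^2/(\alpha+d)^2})$, where I write $v=s(\Vcal^{\text{sup}}_\rho(h,\Eava))$.

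\paragraph{One-dimensional learnability to joint characteristic functions.} Fix $e,e'\in\Eall$, $y\in\Ycal$, and write $p,q$ for the densities of $h^e,h^{e'}$ conditional on $Y=y$. For every $\beta\in\Scal^{d-1}$, the scalar feature $\beta^{\top}h$ has $\Ical_\rho(\beta^{\top}h,\Eava)\ge\Ical^{\text{inf}}(h,\Eava)$, so learnability applied to $\phi=\beta^{\top}h$ gives
\begin{equation*}
\tv\bigl(\Pbb(\beta^{\top}h^e\mid y),\Pbb(\beta^{\top}h^{e'}\mid y)\bigr)\;\le\;\Vcal_\rho(\beta^{\top}h,\Eall)\;\le\;s\bigl(\Vcal_\rho(\beta^{\top}h,\Eava)\bigr)\;\le\;v.
\end{equation*}
Writing $t=r\beta$ with $r=|t|$ and $\beta=t/|t|$, the joint characteristic function factors through the projection: $\widehat p(t)=\mathbb E[e^{ir\beta^{\top}h^e}\mid y]$ is simply the one-dimensional characteristic function of $\beta^{\top}h^e\mid y$ at $r$, and similarly for $q$. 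The elementary inequality $|\widehat u(r)-\widehat w(r)|\le\|u-w\|_1=2\,\tv(u,w)$ for one-dimensional densities then yields the uniform bound $\|\widehat p-\widehat q\|_\infty\le 2v$.

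\paragraph{Two-scale Fourier inversion.} The heart of the proof is to upgrade this uniform control on $\widehat p-\widehat q$ to an $L^1$ control on $p-q$, using both moment conditions in (\ref{concentration_assumption}). I would truncate twice. First, by Fourier inversion $\|p-q\|_\infty\le(2\pi)^{-d}\int|\widehat p-\widehat q|\,dt$; split the integral at radius $T$, bound the inner ball by $2v\cdot c_d T^d$ and the tail by $T^{-\alpha}\int|t|^\alpha(|\widehat p|+|\widehat q|)\,dt\le 2M_2 T^{-\alpha}$, and optimize $T\sim v^{-1/(\alpha+d)}$ to get $\|p-q\|_\infty\lesssim v^{\alpha/(\alpha+d)}$. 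Second, split $\int|p-q|\,dx$ at radius $R$: the inner ball contributes $c_d R^d\|p-q\|_\infty$, while the tail is $\le R^{-\alpha}\int|x|^\alpha(p+q)\,dx\le 2M_1 R^{-\alpha}$; optimizing $R\sim v^{-\alpha/(\alpha+d)^2}$ yields $\|p-q\|_1\lesssim v^{\alpha^2/(\alpha+d)^2}$, uniformly in $e,e',y$. Plugged into the first display this is the claimed bound.

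\paragraph{Main obstacle.} The delicate point is precisely the \emph{double} truncation producing the squared exponent $\alpha^2/(\alpha+d)^2$: a single Plancherel or Cauchy--Schwarz interpolation between the sup bound on $\widehat p-\widehat q$ and one moment condition only delivers $\alpha/(\alpha+d)$, so one is forced to first obtain the pointwise estimate $\|p-q\|_\infty$ and then re-integrate it, paying the Fourier-dimension exponent twice. Beyond checking that both optimizations can be carried out uniformly in $(e,e',y)$ with the single pair of constants $(M_1,M_2)$ from (\ref{concentration_assumption}), the remaining ingredients---the TV-to-error reduction, the Cram\'er--Wold-type identification of the joint characteristic function with its one-dimensional projections, and the reduction of the full classifier's variation to the unit sphere via $\Vcal^{\text{sup}}$---are routine.
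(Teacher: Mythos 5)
Your proposal is correct and follows essentially the same route as the paper's proof: the same reduction of $\err(f)$ to the label-conditional $L^1$ distance between feature densities, the same use of learnability applied to every projection $\beta^\top h$ to obtain a uniform bound on the characteristic-function difference, and the same double truncation with radii $T\sim v^{-1/(\alpha+d)}$ and $R\sim v^{-\alpha/(\alpha+d)^2}$ producing the exponent $\alpha^2/(\alpha+d)^2$. The only differences are presentational: you phrase the projection step directly via Cram\'er--Wold and the bound $|\hat u(r)-\hat w(r)|\le\|u-w\|_1$, where the paper invokes the Radon transform and the Projection Theorem (the same identity), and you optimize the two truncation radii sequentially rather than jointly.
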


The above theorem holds for a general classifier learned by any algorithms. Due to its generality, we need to introduce some technical regularity conditions on the density function. 
The assumption (\ref{concentration_assumption}) assume the decay rate of density and its characteristic function, which is common in the literature, e.g. \cite{cavalier2000efficient}.
This theorem demonstrates that, the generalization error can be bounded by a function of the variation of $h$, and it converges to $0$ as the variation approaches to $0$.
Under some special but typical case where the top model $g$ is linear, we can further show that even without the regularity conditions in Theorem~\ref{general ood bound}, we have a much better (linear) convergence rate.

\begin{theorem}[Linear Top Model]\label{bound_linearg}
Consider any loss satisfying $\ell(\hat y,y) = \sum_{k=1}^K \ell_0(\hat y_k , y_k).$\footnote{This decomposition is a technical assumption to make the proof more convenient. Truncated square loss or Truncated absolute loss satisfy this assumption.}
For any classifier with linear top model $g$, i.e., 
\benrr
f(x) = A h(x) + b \quad \text{with} \quad  A\in \Rbb^{K\times d},\,\, b \in \Rbb ^{K},
\eenrr
if $(\Eava,\Eall)$ is $\big(s(\cdot),\Ical^{\text{inf}}(h,\Eava)\big)$-learnable under $\Phi$ with Total Variation $\rho$, then we have
\benr
\err(f) \leq  O\Big(s\big(\Vcal^{\text{sup}}(h,\Eava)\big)\Big).
\eenr
Here $O(\cdot)$ depends only on $d$ and $C$.
\end{theorem}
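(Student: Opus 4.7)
My plan is to reduce the OOD error of $f(x)=Ah(x)+b$ to the learnability hypothesis applied row by row. Writing the $k$-th row of $A$ as $a_k^\top$ gives $f_k(x)=a_k^\top h(x)+b_k$. Set $\beta_k := a_k/\|a_k\|_2 \in \Scal^{d-1}$ whenever $a_k\ne 0$ (the case $a_k=0$ is trivial, since $f_k$ is then constant in $x$). The very definitions of $\Vcal^{\text{sup}}$ and $\Ical^{\text{inf}}$ give
\[
\Vcal(\beta_k^\top h,\Eava)\le \Vcal^{\text{sup}}(h,\Eava),\qquad \Ical(\beta_k^\top h,\Eava)\ge \Ical^{\text{inf}}(h,\Eava),
\]
and the second inequality is exactly the informativeness threshold activating the learnability hypothesis with $\delta=\Ical^{\text{inf}}(h,\Eava)$. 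Monotonicity of $s$ then yields
\[
\Vcal(\beta_k^\top h,\Eall)\le s\bigl(\Vcal(\beta_k^\top h,\Eava)\bigr)\le s\bigl(\Vcal^{\text{sup}}(h,\Eava)\bigr).
\]

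The second step is to lift this bound from $\beta_k^\top h$ back to the actual scalar predictor $a_k^\top h+b_k$. Since total variation is invariant under translation and nonzero rescaling of a random variable, for every $e,e'\in\Eall$ and $y\in\Ycal$,
\[
\rho\bigl(\Pbb(a_k^\top h^e+b_k\mid y),\Pbb(a_k^\top h^{e'}+b_k\mid y)\bigr)=\rho\bigl(\Pbb(\beta_k^\top h^e\mid y),\Pbb(\beta_k^\top h^{e'}\mid y)\bigr),
\]
so $\Vcal(a_k^\top h+b_k,\Eall)\le s(\Vcal^{\text{sup}}(h,\Eava))$.

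To finish, I use the decomposition $\ell(\hat y,y)=\sum_k\ell_0(\hat y_k,y_k)$, the balanced-label assumption $\Pbb(Y^e=y)=1/K$, and the TV duality $|\Ebb_P[g]-\Ebb_Q[g]|\le 2\|g\|_\infty\,\rho(P,Q)$ applied componentwise to $g(\cdot)=\ell_0(\cdot,y_k)$, which is bounded by $C$ since $\ell\in[0,C]$. Picking $e^*\in\Eall$ attaining $\Lcal(\Eall,f)$ and any $e_0\in\Eava$,
\[
\err(f)\le\frac{1}{K}\sum_{y\in\Ycal}\sum_{k=1}^K 2C\cdot\Vcal(a_k^\top h+b_k,\Eall)\le 2CK\cdot s\bigl(\Vcal^{\text{sup}}(h,\Eava)\bigr),
\]
which is the desired $O\bigl(s(\Vcal^{\text{sup}}(h,\Eava))\bigr)$ bound.

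The main subtlety is the affine invariance of total variation in the second step: it is precisely the reason $\Vcal^{\text{sup}}$ is defined as a supremum over the unit sphere rather than being sensitive to $\|A\|$. If one instead used an $L_2$ or Wasserstein-type distance for $\rho$, this step would pick up an extra factor of $\|a_k\|$ and the proof would break unless the classifier norm were separately controlled. Everything else is routine, modulo the minor check that each $\ell_0$ is nonnegative so that $\ell\in[0,C]$ forces $\ell_0\le C$---which holds for the truncated square/absolute losses cited in the footnote.
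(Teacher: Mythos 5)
Your proof is correct and reaches the same bound (a constant of order $CK$ times $s(\Vcal^{\text{sup}}(h,\Eava))$) by the same overall reduction as the paper: decompose the loss row-wise via $\ell=\sum_k\ell_0$, observe that the $k$-th component loss depends on $x$ only through the scalar $a_k^\top h(x)+b_k$, and invoke learnability for the normalized direction $\beta_k=a_k/\|a_k\|_2$. Where you genuinely diverge is the middle step. The paper performs an explicit change of variables $t=A_jh+b_j$ and identifies the resulting hyperplane integrals as Radon transforms $\Rcal_e(A_j/\|A_j\|_2,\cdot)$, i.e., the densities of the projected features, so that the loss-difference integral is bounded by the $L^1$ distance of these projected densities; this presentation requires the conditional densities to exist and reuses the Radon-transform machinery set up for the proof of the main theorem. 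You instead use two abstract facts: total variation is invariant under injective affine maps (so the variation of $a_k^\top h+b_k$ equals that of $\beta_k^\top h$), and the duality bound $|\Ebb_P[g]-\Ebb_Q[g]|\le 2\|g\|_\infty\,\rho(P,Q)$ applied to $g=\ell_0(\cdot,y_k)$. This is more elementary, avoids any density-existence assumption, and makes transparent exactly why normalizing to the unit sphere costs nothing for total variation --- as you note, this is precisely the step that would fail for an $L_2$- or Wasserstein-type $\rho$ without a separate bound on $\|a_k\|$. Two cosmetic remarks: the maximum over $e'\in\Eall$ need not be attained when $\Eall$ is infinite, so phrase the first step with a supremum or a near-maximizing sequence rather than a maximizer $e^*$; and both your constant and the paper's in fact scale with the number of classes $K$, so the theorem's claim that $O(\cdot)$ depends only on $d$ and $C$ is inherited, not improved, by either argument.
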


\paragraph{Discussion.} Theorem~\ref{general ood bound} shows that, for any model, the generalization gap depends largely on the model's variation captured by $\Vcal^{\text{sup}} (h,\Eava)$.
The result is irrelevant to the algorithm and provides a guarantee for the generalization gap from $\Eava$ to $\Eall$, so long as the learned model $f$ is invariant, i.e. $\Vcal^{\text{sup}} (h,\Eava)$ is small.
When $s(\cdot)$ is fixed, a model with smaller $\Vcal^{\text{sup}}(h,\Eava)$ results in a smaller gap, which matches our understanding that invariant features in $\Eava$ are somehow invariant in $\Eall$.
When $\Vcal^{\text{sup}} (h,\Eava)$ is fixed, more difficult generalization will generate a larger expansion function, which leads to a larger gap.
For the Gaussian class with bounded mean and variance, $\alpha \gg d$ and the convergent rate is almost linear. 

However, without any constraint to $g$, the convergent rate might be small.
Theorem~\ref{bound_linearg} then offers a generalization bound with a \emph{linear} convergent rate under mild assumptions when $g$ is linear, which is common in reality.
It relaxes the concentration assumption (Formula~\ref{concentration_assumption}) and asks only for the integrability of the density.
The convergent rate is identical to the convergent rate of the expansion function, showing that $s(\cdot)$ captures the generalization quite well.

\paragraph{Proof Sketch of Theorem~\ref{general ood bound}.} 
The proof of the main result, Theorem~\ref{general ood bound}, is decomposed into the following steps. 
First, we transform $\text{err}(f)$ into the Total Variation of joint distributions of features in different domains (Step 1).
To bound the Total Variation, it is sufficient to bound the distance of the corresponding Fourier transform, and further, it is equivalent to bound the Radon transform of joint distributions (Step 2).
Eventually, we show that $\Vcal^{\text{sup}}(\beta^\top h ,\Eava)$ can be used to bound the Radon transform, which finishes the proof (Step 3).

{\bf Step~1}. The OOD generalization error can be bounded as:
\benr\label{sketch_formula1}
\text{err}(f) \leq \sup_{(e,e')\in(\Eava,\Eall)}\frac C K \sum_{y\in \Ycal} \int_{h \in \Rbb^d} \big|p_{h^e|Y^e}(h|y)-p_{h^{e'}|Y^{e'}}(h|y))\big|\mathrm dh.
\eenr

{\bf Step~2}. According to the assumption (\ref{concentration_assumption}), the dominant term in (\ref{sketch_formula1}) is
\benr \label{sketch_formula2}
\int_{|h| \leq r_1}\Big| \int_{|t|\leq r_2} e^{-i \langle h,t \rangle}\big(\hat p_{h^e|Y^e}(t|y)-\hat p_{h^{e'}|Y^{e'}}(t|y))\big) \mathrm dt \Big| \mathrm d t,
\eenr
where $r_1$ and $r_2$ are well-selected scalars that depend on $s\big(\Vcal^{\text{sup}}_\rho (h,\Eava)\big).$ By the Projection Theorem \cite{korostelev2012minimax,natterer2001mathematics} and the Fourier Inversion Formula, (\ref{sketch_formula2}) is bounded above by
\benr\label{sketch_formula3}
O(r_1^d r_2^d) \times \int_{u \in \R} \big|\Rcal_{e'}(\beta, u) - \Rcal_e(\beta, u) \big| \mathrm d u,
\eenr
where $\Rcal_e(\beta, u)$ is the Radon transform of $p_{h^{e}|Y^{e}}(t|y).$

{\bf Step~3}. The right-hand side of Formula~\ref{sketch_formula2} can be bounded by $O\big(r_1^d r_2^d s\big(\Vcal^{\text{sup}}_\rho (h,\Eava)\big)\big).$ 
We finish the proof by selecting appropriate $r_1$ and $r_2$ to balance the rate of the dominant term and other minor terms.
For more details, please see Appendix~2 for the complete proofs.

Now we turn to the lower bound of $\text{err}(f)$.

\begin{theorem}[Lower Bound]\label{lower bound}
Consider $0$-$1$ loss: $\ell(\hat y, y) = \mathbb I(\hat y \neq y)$. 
For any $\delta > 0$ and any expansion function satisfying 1) $s'_+(0) \triangleq \lim_{x\to0^+ } \frac {s(x) - s(0)}{x} \in (1,+\infty)$; 2) exists $k>1, t >0$, s.t. $kx \leq s(x) < +\infty, x \in [0,t]$,
there exists a constant $C_0$ and an OOD generalization problem $(\Eava,\Eall)$ that is $(s(\cdot),\delta)$-learnable under linear feature space $\Phi$ w.r.t symmetric KL-divergence $\rho$, s.t. $\forall \varepsilon\in [0,\frac t 2]$, the optimal classifier $f$ satisfying $\Vcal^{\text{sup}}(h,\Eava) = \varepsilon$ will have the OOD generalization error lower bounded by 
\benr
\err(f)\geq C_0\cdot s(\Vcal^{\text{sup}}(h,\Eava)).
\eenr
\end{theorem}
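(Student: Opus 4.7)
The proof is constructive. The plan is to build a concrete Gaussian spurious-correlation OOD problem, explicitly lower-bound the $0$-$1$ error gap at order $\sqrt\varepsilon$, and then convert this into the claimed $C_0\, s(\varepsilon)$ bound using the near-linearity of $s$ at zero. Concretely, take $Y \in \{0,1\}$ uniform and input $X = (X_1, X_2) \in \R^2$ with $X_1 \mid Y \sim \mathcal N(2Y-1, 1)$ (invariant, label-carrying) and $X_2 \mid Y, e \sim \mathcal N(c_e(2Y-1), 1)$ (spurious). Given the hypothesis constants $k > 1$ and $t > 0$, pick $\eta \in [\sqrt{t/8}, \sqrt{t}/2]$, set $\eta' = \eta(2\sqrt{k} - 1)$, and choose $a > \eta'$ large. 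Declare $\Eava = \{e : c_e \in [a - \eta, a + \eta]\}$ and $\Eall = \{e : c_e \in [a - \eta', a + \eta]\}$; the asymmetric extension of $\Eall$ toward weaker spurious signal is what will produce the OOD gap. Let $\Phi$ consist of all linear features $\phi = \beta_1 X_1 + \beta_2 X_2$.

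For learnability, observe that $\phi \mid Y = y, e \sim \mathcal N((2y-1)(\beta_1 + \beta_2 c_e), \beta_1^2 + \beta_2^2)$, and symmetric KL between equal-variance Gaussians reduces to squared-mean-gap over variance, so a direct computation gives
\benn
\Vcal_\rho(\phi, \Eava) = \frac{\beta_2^2 (2\eta)^2}{\beta_1^2 + \beta_2^2}, \qquad \Vcal_\rho(\phi, \Eall) = \frac{\beta_2^2 (\eta + \eta')^2}{\beta_1^2 + \beta_2^2} = k\, \Vcal_\rho(\phi, \Eava).
\eenn
Since $\Vcal_\rho(\phi, \Eava) \leq (2\eta)^2 \leq t$ and $kx \leq s(x)$ on $[0,t]$, the inequality $s(\Vcal_\rho(\phi, \Eava)) \geq \Vcal_\rho(\phi, \Eall)$ holds for every linear $\phi$, so the problem is $(s(\cdot), \delta)$-learnable once $a$ is picked large enough to enforce $\Ical_\rho(\phi, \Eava) \geq \delta$ on the features of interest.

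For the error bound, fix any classifier $f = g \circ h$ with one-dimensional linear $h = h_1 X_1 + h_2 X_2$ (WLOG $h_2 \geq 0$) and $\Vcal^{\text{sup}}(h, \Eava) = \varepsilon$; the constraint gives $h_2/\sigma_h = \sqrt\varepsilon/(2\eta)$ with $\sigma_h^2 = h_1^2 + h_2^2$. Every binary top model reduces to a threshold rule on $h$ at some $\tau$, with 0-1 risk $L(e; \tau) = \tfrac12[1 - \Phi((\tau + m_e)/\sigma_h)] + \tfrac12 \Phi((\tau - m_e)/\sigma_h)$, $m_e = h_1 + h_2 c_e$. A direct computation yields $\partial L / \partial m_e = -\tfrac{1}{2\sigma_h}[\phi_{\mathcal N}((\tau + m_e)/\sigma_h) + \phi_{\mathcal N}((\tau - m_e)/\sigma_h)] < 0$ unconditionally, so $L$ is strictly decreasing in $c_e$; hence the worst domains sit at $c_e = a - \eta$ in $\Eava$ and $c_e = a - \eta'$ in $\Eall$, and
\benn
\err(f) \;\geq\; \int_{a - \eta'}^{a - \eta} \frac{h_2}{2\sigma_h} \bigl[\phi_{\mathcal N}((\tau + m_e)/\sigma_h) + \phi_{\mathcal N}((\tau - m_e)/\sigma_h)\bigr]\, dc_e.
\eenn
Restricting to the training-optimal threshold (by the symmetry of the construction, $\tau = 0$) and using the informativeness hypothesis to confine $|m_e|/\sigma_h$ to a compact interval, the Gaussian densities in the integrand are bounded below by a universal $c_0 > 0$, yielding $\err(f) \geq c_0(\sqrt{k}-1)\sqrt\varepsilon$.

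Finally, since $s'_+(0)$ is finite and $s$ is monotone and finite on $[0,t]$, we have $s(\varepsilon) \leq K\varepsilon$ on $[0,t]$ for some constant $K$; combined with $\sqrt\varepsilon \geq \varepsilon/\sqrt{t/2}$ on $[0, t/2]$, this gives $\err(f) \geq c_0(\sqrt{k} - 1)/(K\sqrt{t/2}) \cdot s(\varepsilon)$, so $C_0 = c_0(\sqrt{k}-1)/(K\sqrt{t/2})$ suffices. The principal technical difficulty is making this lower bound uniform over the admissible classifier family: one must first pin down what ``optimal classifier'' means (training-optimal being the natural choice), and then use a case analysis together with the informativeness hypothesis $\Ical^{\text{inf}}(h, \Eava) \geq \delta$ to rule out degenerate regimes (e.g.\ $h_1 \to 0$, or thresholds $\tau$ far from $0$) in which the Gaussian density in the integrand could otherwise fail to stay bounded away from zero.
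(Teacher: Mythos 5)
Your construction is, at its core, the same as the paper's: a two-dimensional Gaussian problem with one invariant coordinate and one spurious coordinate whose class-conditional mean shifts across domains, with the shift range in $\Eall$ tuned so that the symmetric-KL variation ratio is exactly $k$ for every linear feature; the optimal classifier reduces to a threshold on a one-dimensional projection, the error gap becomes a Gaussian-density integral over an interval of width $\propto(\sqrt{k}-1)\sqrt{\varepsilon}$, and the $\sqrt{\varepsilon}$ rate is converted to $s(\varepsilon)$ using finiteness of $s'_+(0)$ (your $s(\varepsilon)\le K\varepsilon$ on $[0,t]$ plus $\sqrt{\varepsilon}\ge\varepsilon/\sqrt{t/2}$ is the same device as the paper's $|\hat w_2|\ge|\hat w_2|^2$ together with its reduction of general $s$ to the linear expansion function $\tilde s(x)=kx$). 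The differences — a continuous, one-sided family of domains around a large offset $a$ versus the paper's four symmetric domains, and handling general $s$ directly rather than through a lemma for linear expansion functions — are cosmetic.

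There are, however, two genuine gaps. First, you treat the feature extractor as one-dimensional "WLOG," but that is precisely the nontrivial step: the theorem concerns a $d$-dimensional $h=(\phi_1,\dots,\phi_d)$ with an arbitrary top model $g$, and the constraint is only on $\Vcal^{\text{sup}}(h,\Eava)$. The paper closes this by a proportionality argument: if the coefficient vectors of $\phi_1,\dots,\phi_d$ are not all parallel, they span $\mathbb{R}^2$, so some unit combination $\beta^\top h$ isolates the spurious coordinate, whose variation is $t$, contradicting $\Vcal^{\text{sup}}(h,\Eava)=\varepsilon\le t/2$; hence $h$ is effectively one-dimensional and the threshold analysis applies. (In your construction this contradiction forces you to the top of your range $\eta=\sqrt{t}/2$; at $\eta=\sqrt{t/8}$ the isolated spurious direction has variation only $t/2$, which does not contradict $\varepsilon=t/2$, so a purely spurious optimal classifier is not excluded.) Second, you invoke "the informativeness hypothesis $\Ical^{\text{inf}}(h,\Eava)\ge\delta$" both to bound the density below and to rule out degenerate regimes ($h_1\to 0$, $\tau$ far from $0$) — but no such hypothesis is available: in the theorem, $\delta$ appears only in the learnability of the \emph{problem} (and your expansion inequality holds for all linear features anyway, so learnability needs no condition on $a$), while the classifier is constrained only by its variation and by optimality of $\Lcal(\Eava,\cdot)$. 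The degenerate regimes must be excluded by optimality itself, as in the paper: given the variation constraint, minimizing the worst training risk forces the invariant coefficient positive, the threshold to $0$, and the density argument then goes through with constants depending only on the construction (note also that your preference for large $a$ makes that density constant exponentially small in $a$ — legal, since $C_0$ may depend on the problem, but unnecessary).
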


Theorem~\ref{lower bound} shows that $\text{err}(f)$ of optimal classifier $f$ is lower bounded by its variation.
Here ``optimal'' means the classifier that minimize $\mathcal L(f,\Eava)$.
Altogether, the above three theorems offer a bidirectional control of OOD generalization error, showing that our formulation can  offer a fine-grained description of most OOD generalization problem in a theoretical way. 
To pursue a good OOD performance, OOD algorithm should focus on improving predictive performance on $\Eava$ and controlling the variation $\Vcal^{\text{sup}} (h,\Eava)$ simultaneously.
Note that this bound starts from population error, and we call for future works to combine our generalization bound and traditional bound from data samples to population error, giving a more complete characterization of the problem.

 \section{Variation as a Factor of Model Selection Criterion }\label{sec_selection}

As is pointed out in \cite{gulrajani2020search}, model selection has a significant effect on domain generalization, and any OOD algorithm without a model selection criterion is not complete. 
\cite{gulrajani2020search} trained more than 45,900 models with different algorithms, and results show that when traditional selection methods are applied, none of OOD algorithms can outperform ERM \cite{vapnik1992principles} by a significant margin. This result is not strange, since traditional selection methods focus mainly on (validation) accuracy, which is biased in OOD generalization \cite{gulrajani2020search,ye2021out}.
A very typical example is Colored MNIST \cite{arjovsky2019invariant}, where the image is colored according to the label, but the relationship varies across domains.
As explained in \cite{arjovsky2019invariant}, ERM principle will only capture this spurious feature (color) and performs badly in $\Eall$. 
Since ERM is exactly minimizing loss in $\Eava$, any model selection method using validation accuracy alone is likely to choose ERM rather than any other OOD algorithm \cite{ye2021out}.  Thus no algorithm will have a significant improvement compared to ERM.

A natural question arises: what else can we use, in addition to accuracy?
Theorem~\ref{general ood bound} points out that, learning feature with small variation across $\Eava$ is important for decreasing OOD generalization error.
Once a model $f$ achieves a small $\Vcal^{\text{sup}}(h,\Eava)$, then $\text{err}(f)$ will be small. If the validation accuracy is also high, we shall know that the OOD accuracy will remain high.
To this end, we propose our heuristic selection criterion (Algorithm~\ref{Model Selection}).
Instead of considering validation accuracy alone, we combine it with feature variation and \emph{select the model with high validation accuracy as well as low variation.}

\begin{algorithm}
\SetAlgoLined
\textbf{Input:} available dataset $\Xcal_{avail} = (\Xcal_{train},\Xcal_{val})$, candidate models set $\Mcal$, var\_acc\_rate $r_0$. \\
\For{$f=g\circ h$ in $\Mcal$}{
    \For{$i$ in $[d]$}
    {
        $\hat \Vcal_i \leftarrow \max_{y \in \Ycal,\Xcal^e\neq\Xcal^{e'} \in \Xcal_{avail}}\text{Total Variation}(\Pbb(\phi^e_i|y),\Pbb(\phi_i^{e'}|y))$\Comment*[r]{Use GPU KDE}
    }
    $ \Vcal_f \leftarrow \text{mean}_{i\in[d]} \hat \Vcal_i$\\
    $\text{Acc}_f\leftarrow$ compute validation accuracy of $f$ using $\Xcal_{val}$
}
\textbf{Return} $\argmax_{f \in  \Mcal}( \text{Acc}_f - r_0 \Vcal_f)$
\caption{Model Selection}\label{Model Selection}
\end{algorithm}

We briefly explain Algorithm~\ref{Model Selection} here.
For each candidate model, we calculate its variation using the average of each feature's variation, i.e., $\frac{1}{d}\sum_{i \in [d]} \Vcal(\phi_i,\Xcal_{avail})$. 
When deriving the bounds, we use $\Vcal^{\text{sup}}$ instead of their average because we need to consider \emph{the worst case}, i.e., the worst top model.
In practice, we find out that the average of $\Vcal(\phi_i,\Xcal_{avail})$ is enough to improve selection.

Our criterion of model selection is
\benr \label{formula_criterion}
\text{Acc}_f - r_0 \Vcal_f,
\eenr
i.e., we select a model with high validation accuracy and low variation \emph{simultaneously}.
Here $r_0$ is a hyper-parameter representing the concrete relationship between $\text{err}(f)$ and $\Vcal_f$.
Although we have already used one hyper-parameter to help select multiple hyper-parameter combinations, it is natural to ask whether we can further get rid of the selection of $r_0$.
Since $r_0$ represents the relationship between variation and accuracy, which is actually determined by the unknown expansion function, explicitly calculating $r_0$ is not possible.
However, we can empirically estimate $r_0$ using
$
r_0 = \frac{\text{Std}_{f \in \hat \Mcal} \text{Acc}_f}{\text{Std}_{f\in \hat \Mcal} \Vcal_f},
$
where $ \hat \Mcal \subset \Mcal$ is the model with not bad validation accuracy.
We do not use the whole set $ \Mcal$ because some OOD algorithms will perform extremely bad when the penalty is huge, and these models will influence our estimation of the ratio.
Since high validation means large informativeness in learned features,  the use of $\hat \Mcal$ is an implicit application of informative assumption.

As shown in Section~\ref{experimental_result}, our method can select models with higher OOD accuracy in various OOD datasets.
We also explain in Appendix~3 why our method can outperform the traditional method in Color MNIST, where the dataset is hand-make and simple enough to calculate the expansion function.

\section{Experiments}\label{sec_experiment}

In this section, we conduct experiments to compare our model selection criterion (Section~\ref{sec_selection}) with the baseline method\footnote{Our experiments is conducted in DomainBed: \url{https://github.com/facebookresearch/DomainBed}.} \cite{gulrajani2020search}.
Since both the variation and informativeness in Definition~\ref{def_invariance} are based on one-dimensional features, we can directly estimate these quantities feature-by-feature and design model selection method based on them. 
To verify the existence of the expansion function and to see what it's like in a real-world dataset, we plot nearly 2 million features trained in a common-used OOD dataset and compute their variation and informativeness. We then draw the expansion function for this problem.

\subsection{Experiments on Model Selection}\label{experimental_result}

In this section, we conduct experiments to compare the performance of models selected by our method and by validation accuracy. 
We train models on different datasets, different $\Eava$, and select models according to a different selection criteria. 
We then compare the OOD accuracy of selected models.

\paragraph{Settings}
We train our model on three benchmark OOD datasets (PACS \cite{li2017deeper}, OfficeHome \cite{venkateswara2017deep}, VLCS  \cite{torralba2011unbiased}) and consider all possible selections of $(\Eava,\Eall)$ .
We choose ResNet--50 as our network architecture.
We use ERM \cite{vapnik1992principles} and four {common-used} OOD algorithms (CORAL \cite{sun2016deep}, Inter-domain Mixup \cite{yan2020improve}, Group DRO \cite{sagawa2019distributionally}, and IRM \cite{arjovsky2019invariant}).
For each environment setup, we train 200 models using different algorithms, penalties, learning rates, and epoch. 
After training, we employ different selection methods and compare the OOD accuracy of the selected models. 
As stated in Section \ref{sec_selection}, we use the standard deviation of $\Vcal$ and validation accuracy in $\hat \Mcal$ to estimate $r_0$, where $\hat \Mcal = \{f\in\Mcal: \text{Acc}_f \geq \max_{\hat f} \text{Acc}_{\hat f} - 0.1 \}.$
{Note that calculating $\Vcal(\phi_i,\mathcal X_{avail})$ takes calculus many times,} so we design a parallel GPU kernel density estimation to speed up the whole process a hundred times and manage to finish one model in seconds. For more details about the experiments, see Appendix~4.

 \begin{table}[htp]
    \centering

    \caption{Model Selection Result. ``Env'' denotes the unseen domain during training. ``Val'' denotes the OOD accuracy of model selected by validation accuracy. }
        \vspace{10pt}
    \begin{tabular}{|c|c|c c c c|c|c|}
         \hline
        \multirow{3}*{PACS} & Env & A & C & P & S&avg&acc inc\\
        \cline{2-8}
        &Val&85.20\%&80.42\%&96.17\%&77.86\%&84.91\%&-\\
        \cline{2-8}
        &Ours&\textbf{88.72\%}&\textbf{81.74\%}&\textbf{96.83\%}&\textbf{79.00\%}&\textbf{86.57}\%&1.66\%$\uparrow$\\
        \hline
        \multirow{3}*{OfficeHome} & Env & A & C & P & R&avg&acc inc\\
        \cline{2-8}
        &Val&61.85\%&\textbf{55.56\%}&74.72\%&76.25\%&67.09\%&-\\
        \cline{2-8}
        &Ours&\textbf{65.76\%}&55.07\%&\textbf{75.20\%}&\textbf{76.31\%}&\textbf{68.09\%}&1.00\%$\uparrow$\\
        \hline
        \multirow{3}*{VLCS} & Env & C & L & S & V&avg&acc inc\\
        \cline{2-8}
        &Val&97.46\%&64.83\%&\textbf{69.50\%}\footnotemark[6]&\textbf{70.97\%}&75.69\%&-\\
        \cline{2-8}
        &Ours&\textbf{97.81\%}&\textbf{66.98\%}&\textbf{69.50\%}&\textbf{70.97\%}&\textbf{76.32\%}&0.63\%$\uparrow$\\
       
        \hline
        
    \end{tabular}
    \label{selection result tab}
\end{table}
\footnotetext[6]{Notice that some OOD accuracy are the same in the two methods since the same model is selected. This happens when the unseen domain is close to $\Eava$ so that the validation accuracy metric is close to ours.}

\paragraph{Result}
We summarize our experimental results in Table~\ref{selection result tab}. For each environment setup, we select the best model according to Algorithm~\ref{Model Selection} and validation accuracy. 
The results show that on all datasets, our selection criterion significantly outperforms the validation accuracy in average OOD accuracy. 
For a more detailed comparison, our method improves the OOD accuracy in most of the 12 setups.
Our experiments demonstrate that our criterion can help select models with higher OOD accuracy.

\subsection{Learnability of Real-World OOD Problem}\label{sec_assumption_example}

\begin{figure}[htp]
    \centering
    \includegraphics[width=0.95\textwidth]{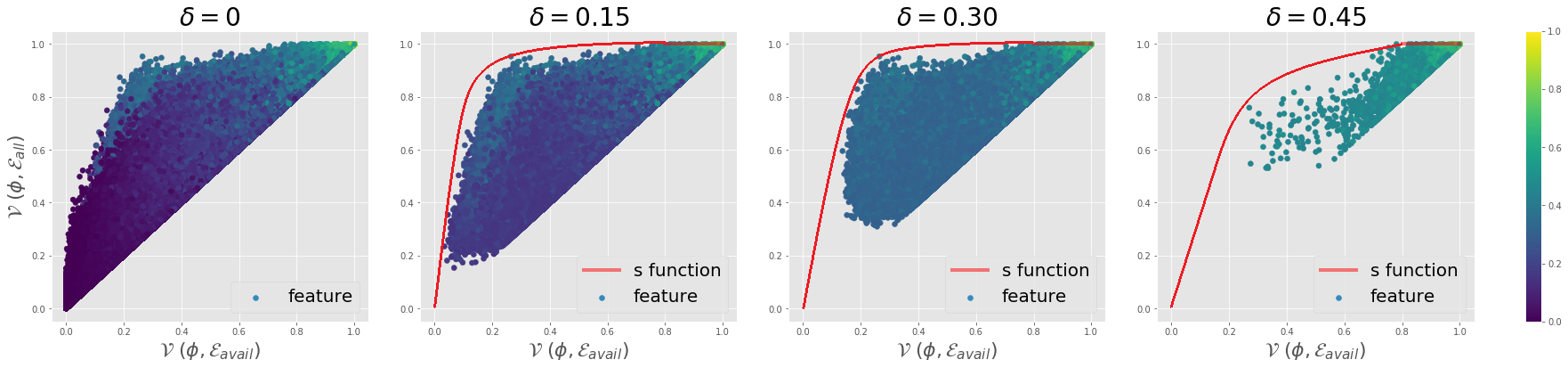}
    \caption{The expansion function of the OOD generalization problem on Office-Home. The x-axis stands for $\Vcal(\phi,\Eava)$ and the y-axis for $\Vcal(\phi,\Eall)$.
    There are approximately 2 million points in each image, with each point representing a feature, and its color represents its informativeness. The solid red line stands for the expansion function under the corresponding $\delta$. When $\delta$ increases, the expansion function decreases. When $\delta =0$, no expansion function can make it learnable.}
    \label{fig_OH_expan_func}
\end{figure}

One may wonder if the expansion function really exists and what it will look like for a real-world OOD generalization task.
In this section, we consider the OOD dataset Office-Home \cite{venkateswara2017deep}.
We explicitly plot \emph{millions of} features' $\Vcal_\rho(\phi,\Eava)$ and $\Vcal_\rho(\phi,\Eall)$ with Total Variation $\rho$ to see what the expansion function is like in this task.
We take the architecture as ResNet-50 \cite{He_2016_CVPR}, and we trained thousands of models with more than five algorithms, obtaining about 2 million features.
The results are in Figure~\ref{fig_OH_expan_func}.

\paragraph{Existence of $s(\cdot)$.}
When $\delta=0$, some non-informative features are nearly $0$-invariant across $\Eava$ but are varying across $\Eall$, so no expansion function can make this task learnable, i.e., this task is NOT $(s(\cdot),0)$ for any expansion function.
But as $\delta$ increases, only informative features are left, and now we can find appropriate $s(\cdot)$ to make it learnable.
We can clearly realize from the figure that $s(\cdot)$ do exist when $\delta\geq 0.15$.

\paragraph{Trade-off between $s(\cdot)$ and $\delta$.}
The second phenomenon is that the slope of $s(\cdot)$ decreases as $\delta$ increases, showing a trade-off between $s(\cdot)$ and $\delta$.
Although this trade-off comes naturally from the definition of learnability, it has a deep meaning.
As is shown in Section~\ref{section_bound}, $\text{err}(f)$ is bounded by $O(s(\varepsilon))$ where $\varepsilon$ is the variation of the model.
To make the bound tighter, a natural idea is to choose a flatter $s(\cdot)$.
However, a flatter $s(\cdot)$ corresponds to a larger $\delta$.
Typically, learning a model to meet this higher informativeness requirement is more difficult, and it is possible that the algorithm achieves this by capturing more domain-specific features, which will therefore increase the variation of the model, $\varepsilon$.
As a result, we are not sure whether $s(\varepsilon)$ will increase or decrease.
We believe this is also the essence of model selection: i.e., \emph{to trade-off between the variation and informativeness of a model}, which is done in Formula~\ref{formula_criterion}.

\section{More Related Works}\label{related_work}

Domain generalization \cite{blanchard2011generalizing,muandet2013domain}, or OOD generalization, has drawn much attention recently \cite{gulrajani2020search,koh2020wilds}.
The goal is to learn a model from several training domains and expect good performance on unseen test domains.  
\cite{wang2021generalizing,zhou2021domain} offer a comprehensive survey.
A popular solution is to extract domain-invariant feature representation.
\cite{peters2016causal} and \cite{rojas2018invariant} proved that when the model is linear, the invariance under training domains can help discover invariant features on test domains.
\cite{arjovsky2019invariant} introduces the invariant prediction into neural networks and proposes a practical objective function.
After that, a lot of works arise from the view of causal discovery, distributional robustness and conditional independence \cite{ahuja2020invariant,bai2020decaug,creager2021exchanging,chang2020invariant,jin2020domain,koyama2020out,krueger2020out,parascandolo2020learning,sagawa2019distributionally,xie2020risk}.
On the other hand, some works point out the weakness of existing methods from the theoretical and experimental perspectives \cite{ahuja2020empirical,gulrajani2020search,kamath2021does,nagarajan2020understanding,rosenfeld2020risks}.

The OOD generalization requires restrictions on how the target domains may differ.
A straightforward approach is to define a set of test domains around the training domain using some distribution distance measure \cite{bagnell2005robust,blanchet2019data,esfahani2018data,hu2013kullback,sagawa2019distributionally,shafieezadeh2015distributionally,sinha2018certifying,xie2020risk}.
Another feasible route is the causal framework which is robust to the test distributions caused by interventions\cite{pearl2009causality,peters2017elements} on variables, e.g.,
\cite{arjovsky2019invariant,heinze2021conditional,magliacane2017domain,meinshausen2018causality,muller2020learning,pfister2019stabilizing,rojas2018invariant,scholkopf2012causal}.
The principle of these methods is that a causal model is invariant and can achieve the minimal worst-case risk \cite{aldrich1989autonomy,haavelmo1944probability,pearl2009causality,rojas2018invariant}.
Since the test distribution is unknown, additional assumptions are required for generalization analysis.  \cite{blanchard2011generalizing,deshmukh2019generalization,muandet2013domain} assume that the domains are generated from a hyper-distribution and measures the average risk estimation error bound.
\cite{albuquerque2019adversarial} derives a
risk bound for any linear combination of training domains. 
For more related results in domain adaptation, a closed field where the test domains can be seen but are unlabeled, please see \cite{ben2007analysis,ben2010theory,johansson2019support}. 

\section{Conclusion}\label{sec_conclusion}
In this paper, we take the first step towards a rigorous theoretical framework of OOD generalization. We propose a mathematical formulation to characterize the learnability of OOD generalization problem. Based on our framework, we prove generalization bounds and give guarantees for OOD generalization error. Inspired by our bound, we design a model selection criterion to check the model's variation and validation accuracy simultaneously. Experiments show that our metric has a significant advantage over the traditional selection method.

\section{Appendix: Illustration of Model's Variation}\label{appsec1}

In this section, we illustrate why we need to define the variation of a model $f$ as
\benrr
\Vcal^{\text{sup}}(h, \Eava) &\triangleq &\sup_{\beta \in \mathcal S^{d-1}} \Vcal(\beta^\top h, \Eava),\\
\eenrr
where $(\beta^\top h)(x) = \beta^\top h(x)$ and $\Scal^{d-1} = \{\beta \in \mathbb R^d: \|\beta\|_2 = 1\}$ is the unit $(d-1)$-sphere.

\begin{figure}[htp]
    \centering
    \includegraphics[width=0.5\textwidth]{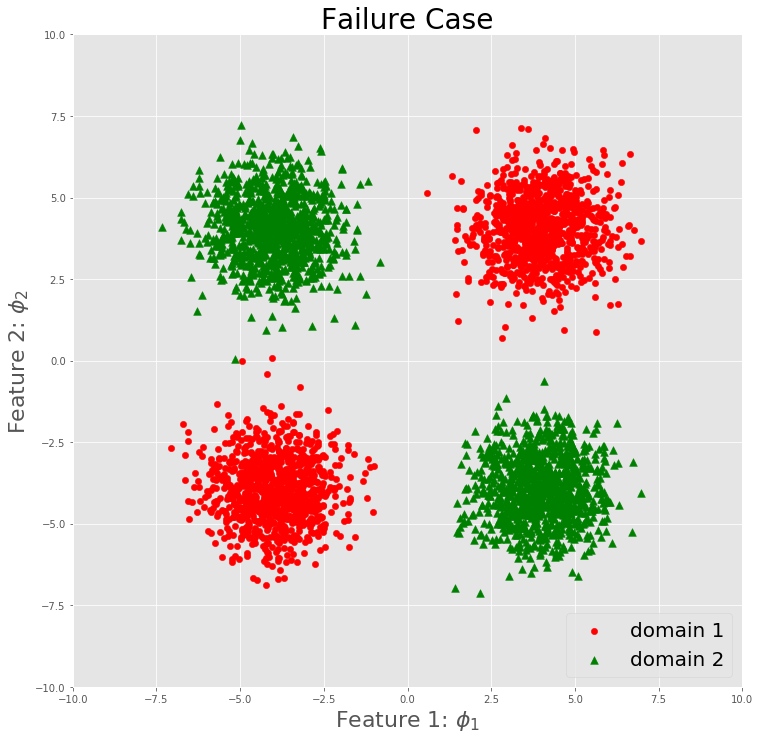}
    \caption{The Failure Case.}
    \label{FigA1}
\end{figure}

One seemingly plausible definition of the variation of a classifier $f$ can be the supremum over all $\Vcal(\phi_i,\Eava),i\in[d]$. However, there exist cases where $\Vcal(\phi_i,\Eall) = 0,\forall i\in[d]$ but the distribution of $h$ varies a lot in $\Eava$. 
We give a concrete failure case here. 

Consider a binary classification task with $\mathcal Y =\{-1,1\}$ and let $d=2$. Assume we learn a feature extractor $h = (\phi_1, \phi_2)^\top$ such that for a given label $y$, the distributions of $h$ under two domains are
\benrr
\text{Domain 1: } & y \sim \text{unif}\{+1,-1\}, & h|y \sim  \Ncal\big(y(4,4)^\top, \mathbf I_2 \big) \\
\text{Domain 2: }  & y \sim \text{unif}\{+1,-1\}, & h|y \sim  \Ncal\big(y(4,-4)^\top, \mathbf I_2 \big).
\eenrr

It is easy to see that the marginal distributions of both features alone are identical across the two domains. However, the distributions of $h$ are different (nearly separate at all). The empirical distributions of the two domains are present in Figure~\ref{FigA1}. This example shows that merely control the supremum of $\Vcal(\phi_i,\Eava),i\in[d]$ is not enough to control the Total Variation of two domains' density, and so it is not enough to upper bound the $\text{err}(f)$. To do so, we need a stronger quantity like $\Vcal^{\text{sup}}(h,\Eava).$

\section{Appendix: Proofs}\label{appsec2}

In this section, we provide complete proofs of our three bounds.

\subsection{Proof of Theorem~4.1}

{\bf Theorem} {\it
Let the loss function $\ell(\cdot,\cdot)$ be bounded by $[0,C].$ We denote 
\benrr
\Vcal^{\text{sup}}(h, \Eava) &\triangleq &\sup_{\beta \in \mathcal S^{d-1}} \Vcal(\beta^\top h, \Eava),\\
\Ical^{\text{inf}}(h, \Eava) &\triangleq &\inf_{\beta \in \mathcal S^{d-1}} \Ical(\beta^\top h, 
\Eava),
\eenrr
where $(\beta^\top h)(x) = \beta^\top h(x)$ is a feature and $\Scal^{d-1} = \{\beta \in \mathbb R^d: \|\beta\|_2 = 1\}$ is the unit $(d-1)$-sphere. 
Suppose we have learned a classifier $f(x) = g(h(x))$ such that for any $e \in \Eall$ and $y \in \Ycal$, $p_{h^e|Y^e} (h|y) \in L^2(\Rbb^d).$
Denote the characteristic function of random variable $h^e|Y^e$ as 
\benrr
\hat p_{h^e|Y^e}(t|y) = \mathbb E [\exp\{i \langle t,h^e \rangle\}|Y^e=y].
\eenrr
Assume the hypothetical space $\mathcal F$ satisfies the following regularity conditions that $\exists \alpha,M_1,M_2 >0, \forall f \in \Fcal, \forall e \in \Eall, y\in\Ycal$,
\benr\label{eqA1}
\int_{h\in\R^d} p_{h^e|Y^e}(h|y) |h|^\alpha \mathrm d h\leq M_1 \quad \text{and} \quad \int_{t\in\R^d} \hat p_{h^e|Y^e}(t|y) |t|^\alpha \mathrm dt \leq M_2.
\eenr
If $(\Eava,\Eall)$ is $\big(s(\cdot),\Ical^{\text{inf}}(h,\Eava)\big)$-learnable under $\Phi$ with Total Variation $\rho$\footnote{For two distribution $\mathbb P,\mathbb Q$ with probability density function $p,q$, $\rho(\mathbb P,\mathbb Q) = \frac12\int_x |p(x) - q(x)|\mathrm dx$.}, then we have
\benrr
\err(f) \leq O\Big(s\big(\Vcal^{\text{sup}}_\rho (h,\Eava)\big)^{\frac{\alpha^2}{(\alpha+d)^2}}\Big).
\eenrr
Here $O(\cdot)$ depends on $d,C,\alpha,M_1,M_2$.
}

\begin{proof}
For any $e \in \Eava$ and $e' \in \Eall$,
\benrr
\Pbb_{Y}(y) = \Pbb_{Y^e}(y) =\Pbb_{Y^{e'}}(y).
\eenrr
We can decompose the loss gap between $e$ and $e'$ as
\benrr
&& \Ebb\big[ \ell(f(X^{e'}), Y^{e'}) \big] - \Ebb\big[ \ell(f(X^{e}), Y^{e}) \big] \\
&=& \Ebb\big[ \ell(g(h(X^{e'})), Y^{e'}) \big] - \Ebb\big[ \ell(g(h(X^{e})), Y^{e}) \big] \\
&=& \sum_{y=1}^K \Pbb_Y(y)\Big(\Ebb\big[ \ell(g(h(X^{e'})), Y^{e'}) \big| Y^{e'} = y \big] - \Ebb\big[ \ell(g(h(X^{e})), Y^{e}) \big| Y^{e} = y \big] \Big).
\eenrr
Therefore, to bound $\text{err}(f)$, it is sufficient to bound
$$
\Big|\Ebb\big[ \ell(f(X^{e'}), Y^{e'}) \big| Y^{e'} = y \big] - \Ebb\big[ \ell(f(X^{e}), Y^{e}) \big| Y^{e} = y \big] \Big|
$$
for any $y\in \Ycal, (e,e') \in (\Eava,\Eall).$
Given $y,e,e'$, we have
\benrr
&& \Big|\Ebb\big[ \ell(f(X^{e'}), Y^{e'}) \big| Y^{e'} = y \big] - \Ebb\big[ \ell(f(X^{e}), Y^{e}) \big| Y^{e} = y \big] \Big| \\
&\leq& C \int_{\R^d} \big| p_{h^{e'}|Y^{e'}}(h|y) - p_{h^{e}|Y^{e}}(h|y) \big| \mathrm d h = C * I
\eenrr
where $h^e$ represents the $d$-dimensional random vector $h(X^e)$ and 
\benrr
I = \int_{\R^d} \big| p_{h^{e'}|Y^{e'}}(h|y) - p_{h^{e}|Y^{e}}(h|y) \big| \mathrm d h.
\eenrr
In the following, we shall show that the term $I$ is upper bounded by $O\Big(s\big(\Vcal^{\text{sup}}(h, \Eava)\big)\Big).$

First, we decomposed the term $I$ into $I_1 + I_2$ where
\benrr
I_1 &=& \int_{|h|\leq r_1} \big| p_{h^{e'}|Y^{e'}}(h|y) - p_{h^{e}|Y^{e}}(h|y) \big| \mathrm d h \\
I_2 &=& \int_{|h| > r_1} \big| p_{h^{e'}|Y^{e'}}(h|y) - p_{h^{e}|Y^{e}}(h|y) \big| \mathrm d h.
\eenrr
Here $r_1$ is a scalar to be decided, and $|h|$ is the Euclidean norm of $h.$
According to (\ref{eqA1}), the term $I_2$ is bounded above:
\benrr
I_2 &\leq& \int_{|h| > r_1} \big| p_{h^{e'}|Y^{e'}}(h|y) - p_{h^{e}|Y^{e}}(h|y) \big| |h|^\alpha r_1^{-\alpha} \mathrm d h \\
&\leq& r_1^{-\alpha} \Big( \int_{h\in\R^d} \big| p_{h^{e'}|Y^{e'}}(h|y) \big| |h|^\alpha \mathrm d h +  \int_{h\in\R^d} \big| p_{h^{e}|Y^{e}}(h|y) \big| |h|^\alpha \mathrm d h \Big) \\
&\leq& 2 M_1 r_1^{-\alpha}.
\eenrr
Next we deal with $I_1.$ Since $p_{h^{e'}|Y^{e'}}\in L^1(\R^d)$ and $\hat p_{h^{e}|Y^{e}} \in L^1(\R^d)$,
\benrr
p_{h^{e}|Y^{e}}(h|y) = \int_{t \in \R^d} e^{-i\langle t, h\rangle} \hat p_{h^{e}|Y^{e}}(t|y) \mathrm d t.
\eenrr
Then we have

\benrr
&& \big|p_{h^{e'}|Y^{e'}}(h|y) - p_{h^{e}|Y^{e}}(h|y)\big| \\
&\leq& \Big| \int_{t \in \R^d} \exp(-i\langle t, h\rangle) \big( \hat p_{h^{e'}|Y^{e'}}(t|y) - \hat p_{h^{e}|Y^{e}}(t|y) \big) \mathrm d t \Big| \\
&\leq&   \int_{t \in \R^d} \big| \hat p_{h^{e'}|Y^{e'}}(t|y) - \hat p_{h^{e}|Y^{e}}(t|y) \big| \mathrm d t \\
&\leq& \int_{|t| \leq r_2} \big| \hat p_{h^{e'}|Y^{e'}}(t|y) - \hat p_{h^{e}|Y^{e}}(t|y) \big| \mathrm d t \\
&& + r_2^{-\alpha}  \int_{|t|>r_2} \big| \hat p_{h^{e'}|Y^{e'}}(t|y) - \hat p_{h^{e}|Y^{e}}(t|y) \big| |t|^{\alpha} \mathrm d t \\
&\leq& \int_{|t| \leq r_2} \big| \hat p_{h^{e'}|Y^{e'}}(t|y) - \hat p_{h^{e}|Y^{e}}(t|y) \big| \mathrm d t + 2 M_2 r_2^{-\alpha}.
\eenrr
Plugging the above upper bound into $I_1$,
\benrr
I_1 &\leq& \int_{|h|\leq r_1} 
\int_{|t| \leq r_2} \big| \hat p_{h^{e'}|Y^{e'}}(t|y) - \hat p_{h^{e}|Y^{e}}(t|y) \big| \mathrm d t
\mathrm d h + \int_{|h|\leq r_1} 2 M_2 r_2^{-\alpha} \mathrm d h \\
&\leq& \frac{\pi^{d/2}}{\Gamma(d/2+1)} r_1^d \times I_3 
+ \frac{2 M_2 \pi^{d/2}}{\Gamma(d/2+1)} r_1^d r_2^{-\alpha}
\eenrr
where 
\benrr
I_3 = \int_{|t| \leq r_2} \big| \hat p_{h^{e'}|Y^{e'}}(t|y) - \hat p_{h^{e}|Y^{e}}(t|y) \big| \mathrm d t.
\eenrr
Note that $p_{h^{e}|Y^{e}}(t|y) \in L^1(\R^d) \cap L^2(\R^d).$ By the Projection theorem \cite{natterer2001mathematics,korostelev2012minimax},
\benr\label{RandonTrans}
\widehat{\Rcal_e}(\beta, u) = \hat p_{h^{e}|Y^{e}}(u \beta|y), \quad u \in \R, \,\, \beta \in S^{d-1}, 
\eenr
where $\Rcal_e(\beta, u)$ is the Radon transform of $p_{h^{e}|Y^{e}}(t|y)$:
\benrr
\Rcal_e(\beta, u) = \int_{h: \langle h, \beta \rangle = u} p_{h^{e}|Y^{e}}(h|y) \mathrm d v
\eenrr
and $\widehat{\Rcal_e}(\beta, w)$ is the Fourier transform of $\Rcal_e(\beta, u)$ with respect to $u$:
\benrr
\widehat{\Rcal_e}(\beta, w) = \int_{u\in\R} \exp(iuw) \Rcal_e(\beta, u) du.
\eenrr
Thus we can rewrite the term $I_3$ as
\benrr
I_3 &=& \int_{\beta \in S^{d-1}} \int_{|w|\in [0, r_2]} |w|^{d-1}  \big| \widehat{\Rcal_{e'}}(\beta, w) - \widehat{\Rcal_e}(\beta, w)\big| \mathrm d w \mathrm d \beta \\
&\leq& r_2^{d-1} \int_{\beta \in S^{d-1}} \int_{|w|\in [0, r_2]} \big| \widehat{\Rcal_{e'}}(\beta, w) - \widehat{\Rcal_e}(\beta, w)\big| \mathrm d w \mathrm d s\beta\\
&\leq& r_2^{d-1} \int_{\beta \in S^{d-1}} \int_{|w|\in [0, r_2]} \int_{u \in \R} \big|\Rcal_{e'}(\beta, u) - \Rcal_e(\beta, u) \big| \mathrm d u \mathrm d w \mathrm d \beta.
\eenrr

Since the problem is $(s(\cdot),\Ical^{\text{inf}}(h,\Eava))$-learnable, and $\forall \beta \in \mathcal S^{d-1}$, the informativeness of feature $\beta^\top h$ is lower bounded by 
$$
\Ical(\beta^\top h,\Eava)\geq \Ical^{\text{inf}}(h,\Eava),
$$
we know that for any $\beta \in\mathcal S^{d-1}$,
$$
\Vcal(\beta^\top h, \Eall) \leq s\big(\Vcal(\beta^\top h, \Eava)\big).
$$
Therefore, we have 
\benrr
\Vcal^{\text{sup}}(h,\Eall) &=& \sup_{\beta \in \mathcal S_{d-1}} \Vcal(\beta^\top h, \Eall)  \\
&\leq&  \sup_{\beta \in \mathcal S_{d-1}} s\big(\Vcal(\beta^\top h, \Eava)\big) = s\big(\Vcal^{\text{sup}}(h,\Eava)\big).
\eenrr
Note that, for any given $\beta$, $\Rcal_e(\beta, u)$ is the probability density of the projected feature $\beta^\top h.$ 
So for any $e', e \in \Eall$,
\benrr
\int_{u \in \R} \big|\Rcal_{e'}(\beta, u) - \Rcal_e(\beta, u) \big| \mathrm d u \leq s(\Vcal^{\text{sup}}(h, \Eava)).
\eenrr
Therefore,
\benrr
I_3 \leq 2 r_2^d \times \frac{\pi^{d/2}}{\Gamma(d/2+1)} \times s(\varepsilon).
\eenrr
Combining the result of $I_1$, $I_2$ and $I_3$, we have
\benrr
I \leq \frac{2 \pi^d}{\Gamma^2(d/2+1)} r_1^d r_2^d s(\Vcal^{\text{sup}}(h, \Eava)) + \frac{2 M_2 \pi^{d/2}}{\Gamma(d/2+1)} r_1^d  r_2^{-\alpha} + 2M_1 r_1^{-\alpha}.
\eenrr
We take
\benrr
r_1 = M_1^{\frac{1}{\alpha+d}} M_2^{-\frac{d}{(\alpha+d)^2}} s(\Vcal^{\text{sup}}(h, \Eava))^{-\frac{\alpha}{(\alpha+d)^2}},
\eenrr
and
\benrr
r_2 = M_2^{\frac{1}{\alpha+d}} s(\Vcal^{\text{sup}}(h, \Eava))^{-\frac{1}{\alpha+d}}.
\eenrr
Hence
\benrr
I \leq \Big( \frac{2\pi^d}{\Gamma^2(d/2+1)} + \frac{2 \pi^{d/2}}{\Gamma(d/2+1)} +2 \Big) M_1^{\frac{d}{\alpha+d}} M_2^{\frac{\alpha d}{(\alpha+d)^2}} s(\Vcal^{\text{sup}}(h, \Eava))^{\frac{\alpha^2}{(\alpha+d)^2}}.
\eenrr
The proof is finished.
\end{proof}

\subsection{Proof of Theorem~4.2}\label{appsec3}

{\bf Theorem} {\it Consider any loss satisfying $\ell(\hat y,y) = \sum_{k=1}^K \ell_0(\hat y_k , y_k).$
Let the loss function $\ell_0(\cdot,\cdot)$ be bounded by $[0,C].$

For any classifier with linear top model $g$, i.e., 
\benrr
f(x) = A h(x) + b \quad \text{with} \quad  A\in \Rbb^{K\times d},\,\, b \in \Rbb ^{K},
\eenrr
if $(\Eava,\Eall)$ is $\big(s(\cdot),\Ical^{\text{inf}}(h,\Eava)\big)$-learnable under $\Phi$ with Total Variation $\rho$, then we have
\benr
\err(f) \leq  O\Big(s\big(\Vcal^{\text{sup}}(h,\Eava)\big)\Big).
\eenr
Here $O(\cdot)$ depends only on $d$ and $C$.
}

\begin{proof}

For any $e \in \Eava$ and $e' \in \Eall$, we know that
$
\Pbb_{Y}(y) = \Pbb_{Y^e}(y) =\Pbb_{Y^{e'}}(y).
$
Furthermore the generalization gap between $e$ and $e'$ is
\benrr
&& \Ebb\big[ \ell(f(X^{e'}), Y^{e'}) \big] - \Ebb\big[ \ell(f(X^{e}), Y^{e}) \big] \\
&=& \sum_{y=1}^K \Pbb_Y(y)\Big(\Ebb\big[ \ell(f(X^{e'}), Y^{e'}) \big| Y^{e'} = y \big] - \Ebb\big[ \ell(f(X^{e}), Y^{e}) \big| Y^{e} = y \big] \Big) \\
&=& \sum_{y=1}^K \Pbb_Y(y)\Big(\Ebb\big[ \sum_{j=1}^K \ell_0( f(X^{e'})_j, y_j ) \big| Y^{e'} = y \big] - \Ebb\big[ \sum_{j=1}^K \ell_0( f(X^{e})_j, y_j )  \big| Y^{e} = y \big] \Big) \\
&=& \sum_{y=1}^K \sum_{j=1}^K \Pbb_Y(y)\Big(\Ebb\big[ \ell_0( f(X^{e'})_j, y_j ) \big| Y^{e'} = y \big] - \Ebb\big[ \ell_0( f(X^{e})_j, y_j )  \big| Y^{e} = y \big] \Big),
\eenrr
where $f(X^{e'})_j = A_j h(x) + b_j.$ Here $A_j$ is the $j$-th row of the matrix $A$ and $b_j$ stands for the $j$-th element of the vector $b.$
Then it suffices to uniformly bound
\benrr
&& \Big| \Ebb\big[ \ell_0( f(X^{e'})_j, y_j ) \big| Y^{e'} = y \big] - \Ebb\big[ \ell_0( f(X^{e})_j, y_j )  \big| Y^{e} = y \big] \Big| \\
&=&  \Big| \int_{\R^d} \ell_0 (  A_j h + b_j, y )  \big( p_{h^{e'}|Y^{e'}}(h|y) - p_{h^{e}|Y^{e}}(h|y) \big) \mathrm d h \Big|, 
\eenrr
where $h^e$ is the $d$-dimensional random vector $h(X^e).$ 
Let $t = A_j h + b_j.$ Then,
\benrr
&&  \Big| \int_{\R^d} \ell_0 (  A_j h + b_j, y )  \big( p_{h^{e'}|Y^{e'}}(h|y) - p_{h^{e}|Y^{e}}(h|y) \big) \mathrm d h \Big| \\
&=& \Big| \int_{t \in \R} \int_{\frac{A_j}{\|A_j\|_2} h + \frac{b_j}{\|A_j\|_2} = \frac{t}{\|A_j\|_2}} \ell_0 ( t, y )  \big( p_{h^{e'}|Y^{e'}}(h|y) - p_{h^{e}|Y^{e}}(h|y) \big) \mathrm d h \mathrm d t \Big| \\
&\leq& C \times \Big| \int_{t \in \R} \Rcal_{e'}(\frac{A_j}{\|A_j\|_2}, \frac{t-b_j}{\|A_j\|_2}) - \Rcal_{e}(\frac{A_j}{\|A_j\|_2}, \frac{t-b_j}{\|A_j\|_2}) \mathrm d t \Big| \\
&\leq& O\big(s(\Vcal^{\text{sup}}(h, \Eava)) \big).
\eenrr
Hence 
\benrr
\Ebb\big[ \ell(f(X^{e'}), Y^{e'}) \big] - \Ebb\big[ \ell(f(X^{e}), Y^{e}) \big] \leq O\big(s(\Vcal^{\text{sup}}(h, \Eava)) \big).
\eenrr

\end{proof}

\subsection{Proof of Theorem~4.3}\label{appsec4}

{\bf Theorem} {\it Consider $0$-$1$ loss: $\ell(\hat y, y) = \mathbb I(\hat y \neq y)$. 
For any $\delta > 0$ and any expansion function satisfying 1) $s'_+(0) \triangleq \lim_{x\to0^+ } \frac {s(x) - s(0)}{x} \in (1,+\infty)$; 2) exists $k>1, t >0$, s.t. $kx \leq s(x) < +\infty, x \in [0,t]$,
there exists a constant $C_0$ and an OOD generalization problem $(\Eava,\Eall)$ that is $(s(\cdot),\delta)$-learnable under linear feature space $\Phi$ w.r.t symmetric KL-divergence $\rho$, s.t. $\forall \varepsilon\in [0,\frac t 2]$, the optimal classifier $f$ satisfying $\Vcal^{\text{sup}}(h,\Eava) = \varepsilon$ will have the OOD generalization error lower bounded by 
\benr
\err(f)\geq C_0\cdot s(\Vcal^{\text{sup}}(h,\Eava))
\eenr
}

\begin{proof}
The expansion function $s(x)$ satisfies $kx \leq s(x) < +\infty$, $x \in [0,t].$
Construct an another function as:
\benrr
\tilde s(x) = \begin{cases} k x & x\leq t  \\ s(x) & x > t \end{cases}.
\eenrr
Clearly, $\tilde s(\cdot)$ is also an expansion function.
According to Lemma~\ref{lemma_lower_linear}, for $(\tilde s(x),\delta)$, there exists a constant $C_1 > 0$ and $(\Eava,\Eall)$ that is $(\tilde s(\cdot),\delta)$, s.t. for any $\Vcal^{\text{sup}}(h,\Eava) \leq \frac t 2$, the optimal classifier $f$ satisfies
$$
\text{err}(f) \geq C_1 \tilde s(\Vcal^{\text{sup}}(h,\Eava)) = C_1 k_1 \Vcal^{\text{sup}}(h,\Eava).
$$
Then it suffices to find a constant $C'_0$ such that
\benrr
\Vcal^{\text{sup}}(h,\Eava) \geq C'_0 s(\Vcal^{\text{sup}}(h,\Eava)).
\eenrr
Notice that $s'_{+}(0) = M' \in (1, +\infty).$ Thus there exists $\delta$ such that $\forall x \in [0, \delta]$, $\frac{s(x)}{x} \leq 2M'.$ 
In addition, $s(x) \leq M$, $x\in [0, t/2].$ 
Then, for any $x \geq \delta$, $\frac{x}{s(x)} \geq \frac{\delta}{M}.$ Let $C'_0 = \max\{ \frac{\delta}{M}, \frac{1}{2M'}\}.$ So, for any $x \in [0, t/2]$, $x \geq C'_0 s(x).$ The proof is finished.
\end{proof}

\begin{lemma}[lower bound for linear expansion function]\label{lemma_lower_linear}
Consider $0$-$1$ loss $\ell(\hat y,y) = \mathbb I(\hat y \neq y)$. 
For any linear expansion function $s(x) = kx$, $x \in [0,t]$, $k \in (1, +\infty)$
and any $\delta>0$,
there exists a constant $C_1$ and an OOD generalization problem $(\Eava,\Eall)$ that is $(s(\cdot),\delta)$-learnable under linear feature space $\Phi$ with symmetric KL-divergence $\rho$, s.t. $\forall \varepsilon\leq \frac t 2$, the optimal classifier $f$ satisfying $\Vcal^{\text{sup}}(h,\Eava) = \varepsilon$ have $\text{err}(f)$ bounded by 
\benr\label{linear_inequality}
\text{err}(f) \geq C_1 \cdot  s(\Vcal^{\text{sup}}(h,\Eava)).
\eenr
\end{lemma}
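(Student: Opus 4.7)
The strategy is to exhibit an explicit 2D Gaussian problem in which all relevant quantities admit closed-form expressions, then bound the optimal classifier's OOD gap from below linearly in $\varepsilon$. Take $\Xcal = \R^2$, $\Ycal = \{0,1\}$, $d = 1$, $a = \sqrt{t}/2$, and four domains with shift parameters $\delta_1 = -a$, $\delta_2 = a$, $\delta_3 = -\sqrt{k}\,a$, $\delta_4 = \sqrt{k}\,a$, with $\Eava = \{e_1,e_2\}$ and $\Eall = \{e_1,\dots,e_4\}$; under $e_i$, let $X \mid Y=y \sim \mathcal N(2y-1, 1) \otimes \mathcal N((2y-1) + \delta_i, 1)$. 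For any linear feature $\phi(x) = c_1 x_1 + c_2 x_2$, a direct sym-KL computation (same-variance Gaussians) gives $\Vcal_\rho(\phi, \Eava) = 4a^2 c_2^2 / (c_1^2+c_2^2) \leq t$ and $\Vcal_\rho(\phi, \Eall) = k\,\Vcal_\rho(\phi, \Eava)$, so the expansion $s(x) = kx$ holds with equality on $[0,t]$ for every linear $\phi$ regardless of its informativeness, establishing $(s,\delta)$-learnability for any $\delta \ge 0$.

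For any $\varepsilon \le t/2$, any feasible $h \in \Phi$ with $\Vcal^{\text{sup}}(h,\Eava) = \varepsilon$ can be written (after rescaling, which leaves the classifier invariant) as $h(x) = \cos\theta \cdot x_1 + \sin\theta \cdot x_2$ with $\sin^2\theta = \varepsilon/(4a^2)$. The training-optimal 0-1 threshold is $\tau^* = 0$ by the symmetry $\delta_1 = -\delta_2$, and the sign choice $\sgn(\cos\theta) = \sgn(\sin\theta)$ maximizes the class separation $M := \cos\theta + \sin\theta$, which minimizes training loss among feasible classifiers. Writing $\Psi$ for the standard normal CDF and $F(u) := \Psi(u-M) + \Psi(-u-M)$ (an even function), a direct computation shows the training loss equals $F(\sqrt\varepsilon/2)/2$ and the OOD loss on $e_4$ equals $F(\sqrt{k\varepsilon}/2)/2$, so $\err(f) \geq [F(\sqrt{k\varepsilon}/2) - F(\sqrt\varepsilon/2)]/2$.

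The last step is a linear-in-$\varepsilon$ lower bound for this difference. Differentiation gives $F'(u) = 2\,\varphi(M)\,e^{-u^2/2}\sinh(uM)$ with $\varphi$ the standard normal density; combining $\sinh(uM)\ge uM$ with a uniform lower bound on $e^{-u^2/2}$ over the compact range $[0, \sqrt{kt}/2]$ yields $F'(u) \ge c_0\,u$ for a positive constant $c_0$ depending only on $k$ and $t$. Integrating over $[\sqrt\varepsilon/2, \sqrt{k\varepsilon}/2]$ gives $F(\sqrt{k\varepsilon}/2) - F(\sqrt\varepsilon/2) \ge c_0(k-1)\varepsilon/8$, so $\err(f) \ge c_0(k-1)\varepsilon/16 = C_1 \cdot s(\varepsilon)$ with $C_1 := c_0(k-1)/(16k)$. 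The main obstacle is precisely this integral step: the naive Taylor expansion of $F$ around $u = 0$ only yields a local quadratic bound, and the $\sinh$-identity is what provides the uniform linear lower bound needed over the whole range $\varepsilon \in [0, t/2]$.
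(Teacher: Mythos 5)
Your construction and analysis are essentially sound for scalar feature extractors, and in that regime your argument parallels the paper's own proof: a two-dimensional Gaussian problem with one invariant informative coordinate and one domain-shifted coordinate; symmetric KL between equal-variance Gaussians proportional to the squared mean gap, so that every linear feature satisfies $\Vcal(\phi,\Eall)=k\,\Vcal(\phi,\Eava)$ exactly (hence $(s,\delta)$-learnability for every $\delta$); reduction of the optimal classifier to a zero threshold; and a lower bound on a difference of Gaussian tail integrals. (Your sym-KL is the unaveraged sum while the paper averages the two KL terms, but this factor of two is immaterial since only the ratio $k$ matters.) The finishing steps differ but are both valid: the paper bounds the tail-integral difference below by the interval length times a density lower bound, obtaining a term proportional to $|\hat w_2|\propto\sqrt{\varepsilon}$, and then uses $|\hat w_2|\ge|\hat w_2|^2$ to convert it to a bound linear in $\varepsilon$; you instead integrate the identity $F'(u)=2\varphi(M)e^{-u^2/2}\sinh(uM)\ge c_0u$, which is slightly more work but yields the same linear rate with explicit constants. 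One cosmetic caution: in the paper the spurious coordinate has mean $a_e y$, so each domain's class-conditionals are symmetric about $0$ and the zero threshold is per-domain optimal; in your additive-shift construction the zero threshold is only \emph{worst-case} optimal over the two training domains, so your appeal to symmetry actually requires a short minimax (mixture-Bayes) argument — the claim is true, but not per-domain obvious.

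The genuine gap is your restriction to $d=1$. The lemma concerns the $d$-dimensional feature extractor $h=(\phi_1,\dots,\phi_d)^\top$ of the framework, and $\Vcal^{\text{sup}}(h,\Eava)$ is a supremum over all unit linear combinations $\beta^\top h$; the optimal classifier is taken over all such $h$ with variation $\varepsilon$, not over scalar features only. Your proof never addresses feature extractors whose coefficient vectors $\mathbf w_1,\dots,\mathbf w_d$ are not all parallel. The paper closes exactly this case with a colinearity argument: if the $\mathbf w_i$ span $\R^2$, then some unit $\beta$ gives $\beta^\top h(x)=c\,x_2$ with $c\neq0$, a feature whose training variation equals the maximal value ($4a^2=t$ in your normalization), contradicting $\Vcal^{\text{sup}}(h,\Eava)=\varepsilon\le t/2$; hence all $\mathbf w_i$ are proportional to a single vector, the top model effectively sees one scalar feature, and the loss of the optimal classifier on $h$ equals that of the optimal classifier on that scalar feature — which is your $d=1$ case. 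Without this reduction (or an equivalent one) the proposal does not establish the lemma as stated; with it appended, your argument goes through.
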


\begin{proof}
We construct $(\Eava,\Eall)$ as a binary classification task, where there are two domains in $\Eava$, denoted as $\{1,2\}$, and other two domains in $\Eall \setminus \Eava$, denoted as $\{3,4\}$. 
The dataset $(x,y)$ for domain $e \in \Eall$ is constructed as
$$
y \sim \text{unif}\{-1,1\}, \,\, z\sim \mathcal N(r y,1), \,\, \eta^e \sim \mathcal N(a_e y,1), \,\, x^e = \begin{pmatrix} z \\ \eta^e \end{pmatrix}.
$$
Here we set
\benrr
a_1 = -\sqrt{\frac{t}{2}},\,\, a_2 =  \sqrt{\frac{t}{2}},\,\, a_3 =- \sqrt{ \frac{kt}{2} }, \,\, a_4 = \sqrt{ \frac{kt}{2} }, \,\, r = \sqrt{t }.
\eenrr
For any $\mathbf w = (w_1,w_2)^\top$, the distribution of $\phi^e = \mathbf w^\top x^e$ given $y$ is
$$
\phi^e|y \sim \mathcal N \big(y(w_1 r + w_2 a_e),\|\mathbf w \|^2 \big).
$$
Now we calculate the variation of the feature.
Notice that the symmetric KL divergence $\rho$ of two Gaussian distributions $\Pbb_1 \sim N(\mu_1, \sigma^2)$ and $\Pbb_2 \sim N(\mu_2, \sigma^2)$ is
\benrr
\rho(\Pbb_1, \Pbb_2) &=& \frac{1}{2} D_{KL}(\Pbb_1\|\Pbb_2) + \frac{1}{2} D_{KL}(\Pbb_2\|\Pbb_1)\\
&=& \frac{1}{2} \frac{1}{\sigma^2} (\mu_1-\mu_2)^2 .
\eenrr
Therefore, we have
\benrr
\Vcal(\phi^e, \Eava) = \sup_{y \in \{-1,1\}} \frac{w_2^2|a_1-a_2|^2}{2\|\wbf\|_2^2} = \frac{t w_2^2}{\|\wbf\|_2^2} \leq t,
\eenrr
and
\benrr
\Vcal(\phi^e, \Eall) = \sup_{y \in \{-1,1\}} \sup_{e, e'} \frac{w_2^2|a_e-a_{e'}|^2}{2\|\wbf\|_2^2} = k \frac{t w_2^2}{\|\wbf\|_2^2}.
\eenrr
Thus, for any $\phi \in \Phi$,
\benrr
s(\Vcal(\phi, \Eava)) = k \frac{t w_2^2}{\|\wbf\|_2^2} \geq \Vcal(\phi, \Eall).
\eenrr
Therefore the OOD generalization problem $(\Eava,\Eall)$ that is $(s(\cdot),\delta)$-learnable under linear feature space $\Phi$ with symmetric KL-divergence $\rho.$

\paragraph{Optimal Classifier} Now we consider $h(x) = (\phi_1(x), \ldots ,\phi_d(x))^\top$ such that $\Vcal^{\text{sup}}(h,\Eava) = \varepsilon \leq \frac t 2$, and see what the optimal classifier is like.
Let $\wbf_i=(w_{i1}, w_{i2})^\top$ be the coefficients of $\phi_i.$ 

If for any $i \in [d]$, $w_{i2}=0.$ Then $s(\Vcal^{\text{sup}}(\phi, \Eava)) = 0$ and for any $f$, $\err(f)=0.$ So inequality~\ref{linear_inequality} holds.

Now suppose there exists $i_0 \in [d]$ such that $w_{i_02} \neq 0.$ Without loss of generality, we assume $i_0=1$ and $\|\wbf_i\|\neq 0$ for any $i \in [d].$ 
We then claim that $\forall i\in [d], \exists c_i \in \Rbb , \wbf_i = c_i \wbf_1.$
Otherwise, there exists a normalized vector $\beta\in\R^d$ such that $\beta^\top h(x) = c' (0,1)^\top x$, and we have $\Vcal(\beta^\top h,\Eava) = t $, which is contradictory to $\Vcal^{\text{sup}}(h,\Eava) \leq \frac t 2.$

Since $\phi_i = c_i \phi_1$, it is obvious that under any loss function, the loss of optimal classifier on $h$ is the same as the optimal classifier on $\phi_1.$ In the following, we shall focus on the optimal loss on $\phi_1.$

Without loss of generality, we further denote $\phi_1$ as $\phi(x) = (w_1, w_2) x$ and $w_1>0.$ 
Since $\Vcal^{\text{sup}}(h,\Eava) = \varepsilon \leq t/2$, $|w_2| \leq w_1$. In addition, we have $r > |a_e|, e\in \{1,2\}$. Therefore, $w_1 r + w_2 a_e > 0, \text{sign}(y(w_1r+w_2a_e)) = \text{sign}(y)$, and we can easily realize that for any $e \in \Eava$, the optimal classifier is $f(x) = \text{sign}( \phi(x) )$.

The loss of $f$ in $\Eava$ is
\benrr
\mathcal L(\Eava,f) &=& \max{e\in\{1,2\}}  \frac 12 \big[\Pbb[f(x^e)<0 | Y=1] + \Pbb[f(x)>0 | Y=-1] \big]\\
&=& \max{e\in\{1,2\}} \Pbb[f(x^e)<0 | Y=1] \\ 
&=& \max_{e\in\{1,2\}} \int_{-\infty}^0 \frac{1}{\sqrt{2\pi}\|\wbf\|} \exp\Big(-\frac{1}{2} \frac{\big(\phi - (w_1 r + w_2 a_e)\big)}{\|\wbf\|^2} \Big) d \phi \\
&=&  \max_{e\in\{1,2\}} \int^{+\infty}_{w_1 r + w_2 a_e} \frac{1}{\sqrt{2\pi}\|\wbf\|} \exp\Big(-\frac{1}{2} \frac{\phi^2}{\|\wbf\|^2} \Big) d \phi \\
&=&  \int^{+\infty}_{w_1 r - |w_2| \sqrt{\frac{t}{2}}} \frac{1}{\sqrt{2\pi}\|\wbf\|} \exp\Big(-\frac{1}{2} \frac{\phi^2}{\|\wbf\|^2} \Big) d \phi \\
&=&  \int^{+\infty}_{\hat w_1 r - |\hat w_2| \sqrt{\frac{t}{2}}} \frac{1}{\sqrt{2\pi}} \exp\Big(-\frac{1}{2} \phi^2 \Big) d \phi,
\eenrr
where $\hat w_1 = w_1/\|\wbf\|$ and $\hat w_2 = w_2/\|\wbf\|.$ 
Similarly, we have
\benrr
\mathcal L(\Eall,f) = \int^{+\infty}_{\hat w_1 r - |\hat w_2| \sqrt{\frac{kt}{2}}} \frac{1}{\sqrt{2\pi}} \exp\Big(-\frac{1}{2} \phi^2 \Big) d \phi.
\eenrr
Combined together, the OOD generalization error of the optimal classifier with $\Vcal^{\text{sup}}(h,\Eava)  = \varepsilon$ is
\benrr
\err(f) &=& \int^{+\infty}_{\hat w_1 r - |\hat w_2| \sqrt{\frac{kt}{2}}} \frac{1}{\sqrt{2\pi}} \exp\Big(-\frac{1}{2} \phi^2 \Big) d \phi - \int^{+\infty}_{\hat w_1 r - |\hat w_2| \sqrt{\frac{t}{2}}} \frac{1}{\sqrt{2\pi}} \exp\Big(-\frac{1}{2} \phi^2 \Big) d \phi \\
&=& \int^{\hat w_1 r - |\hat w_2| \sqrt{\frac{t}{2}}}_{\hat w_1 r - |\hat w_2| \sqrt{\frac{kt}{2}}} \frac{1}{\sqrt{2\pi}} \exp\Big(-\frac{1}{2} \phi^2 \Big) d \phi\\
&\geq& C(\sqrt{k}-1) \sqrt{\frac t 2}|\hat w_2| \\
&\geq& C(\sqrt{k}-1) \sqrt{\frac t 2}|\hat w_2|^2 \\
&=& \frac {C(\sqrt{k}-1) \sqrt{\frac t 2}}{kt} s(\Vcal^{\text{sup}}(h,\Eava)).
\eenrr
We finish our proof by choosing $C_1 =\frac {C(\sqrt{k}-1) \sqrt{\frac t 2}}{kt} .$

\end{proof}

\section{Appendix: Experiment on Colored MNIST}
\label{appsec5}

In this section, we conduct experiment on ColoredMNIST, a hand designed OOD dataset, to illustrate why validation accuracy fail to select a good model in OOD dataset.

\subsection{Colored MNIST}\label{appsec51}

The Colored MNIST \cite{arjovsky2019invariant} is a common-used synthetic dataset in OOD generalization problem. In the dataset, picture is labeled with $0$ or $1$, and it contains two color channels, one of which being $28\times28$ pixels gray scale image from MNIST \cite{lecun1998gradient} while the other being a zero matrix. 
Let the grayscale image and the colored image be $X$ and $\tilde X$ respectively, i.e., $\tilde X = [X, 0]^\top$ and $\tilde X = [0,X]^\top$ correspond to red and green image. Given a domain $e \in [0,1]$, for an original image $X$ with the label $\hat Y=\Ibb\{\text{digit}<=4\}$, the data point in Colored MNIST is constructed with
\benr
Y = \begin{cases}\hat Y & \text{w.p. }0.75 \\ 1 -  \hat Y & \text{w.p. }0.25 \end{cases}, \,\, \tilde X^e = \begin{cases}[X,0]^\top & \text{w.p. } e+(1-2e)Y\\ [0,X]^\top & \text{w.p. } e+(1-2e)(1-Y)\end{cases}\label{CMNIST_formula}
\eenr
According to (\ref{CMNIST_formula}), the digit shape is invariant over domains, and the color is varying but might be more informative than the digit shape in some domains. The difficulty of OOD generalization is that we need to avoid learning color, since in $e\in\Eall$ the relationship between $e$ and $y$ might be entirely reversed. 

\subsection{Learnability of Colored MNIST}
As a warm-up, We first prove that for any $\delta$, Colored MNIST is a $(s(\cdot),\delta)$-learnable OOD problem under any feature space $\Phi$ with the total variation distance $\rho$, where
\benr
s(\varepsilon) = \frac{\max_{e, e' \in \Eall} |e-e'| }{\max_{e, e' \in \Eava} |e-e'|} \varepsilon.
\eenr
Here we assume the original dataset MNIST is generated from a distribution.

\begin{proof}
Denote $\phi(\tilde X^e)$ as $\phi^e $ and the density of $\phi(\tilde X^e) |Y^e = y$ as $p_{e,y}(x)$. In addition, denote the density of $\phi([X,0]^\top )|Y^e=y$ as $p^1_{e,y}(x)$ and the density of $\phi([0,X]^\top )|Y^e=y$ as $p^2_{e,y}(x).$
Therefore, we have $\forall e,y$,
\benrr
p_{e,y}(x) = [e+(1-2e)y]p^1_{e,y}(x) + [e+(1-2e)(1-y)]p^2_{e,y}(x)
\eenrr
Since the distance $\rho(\cdot,\cdot)$ is total variation, we know that for any two domains $e, e'$, 
\benrr
&& \rho \big(\Pbb(\phi^e|Y^e=y), \Pbb(\phi^{e'}|Y^{e'}=y) \big) \\
&=& \frac 1 2 \int \big|p_{e,y}(x) - p_{e',y}(x)\big| \mathrm dx\\
&=& \frac 1 2  \int \Big|[e+(1-2e)y]p^1_{e,y}(x) + [e+(1-2e)(1-y)]p^2_{e,y}(x) \\
&& \,\,\,\, -[e'+(1-2e')y]p^1_{e',y}(x) - [e'+(1-2e')(1-y)]p^2_{e',y}(x)\Big| \mathrm d x
\eenrr
Notice that $X$ is invariant across domains. Thus for all $x,y$,
\benrr
p^1_{e,y}(x) = p^1_{e',y}(x) ,
p^2_{e,y}(x) = p^2_{e',y}(x).
\eenrr
We can omit the subscript $e$ and 
\benrr
&& \rho \big(\Pbb(\phi^e|Y^e=y), \Pbb(\phi^{e'}|Y^{e'}=y) \big) \\
&=& \frac 1 2 \int\Big| (e-e')(1-2y) p^1_{y}(x) - (e-e')(2y-1)p^2_{y}(x) \Big| \mathrm dx \\
&=& |e-e'| \int \Big|p^1_{y}(x) - p^2_{y}(x)\Big|\mathrm dx \\
&=& C |e-e'|,
\eenrr
where $C$ is a constant independent to $e,\delta.$ By choosing $e,e'$ separately in $\Eava$ and $\Eall$, we can derive the expansion function of Colored MNIST.

\end{proof}

\subsection{Validation Accuracy VS Out-of-distribution Accuracy in Colored MNIST}\label{appsec52}

\begin{figure}[ht]
  \subfloat[Validation accuracy and OOD accuracy. They are \emph{negative} correlated, i.e., high validation accuracy leads to low OOD accuracy.\label{fig_VALOOD}]{
	\begin{minipage}[c][1\width]{
	   0.45\textwidth}
	   \centering
	   \includegraphics[width=1.1\textwidth]{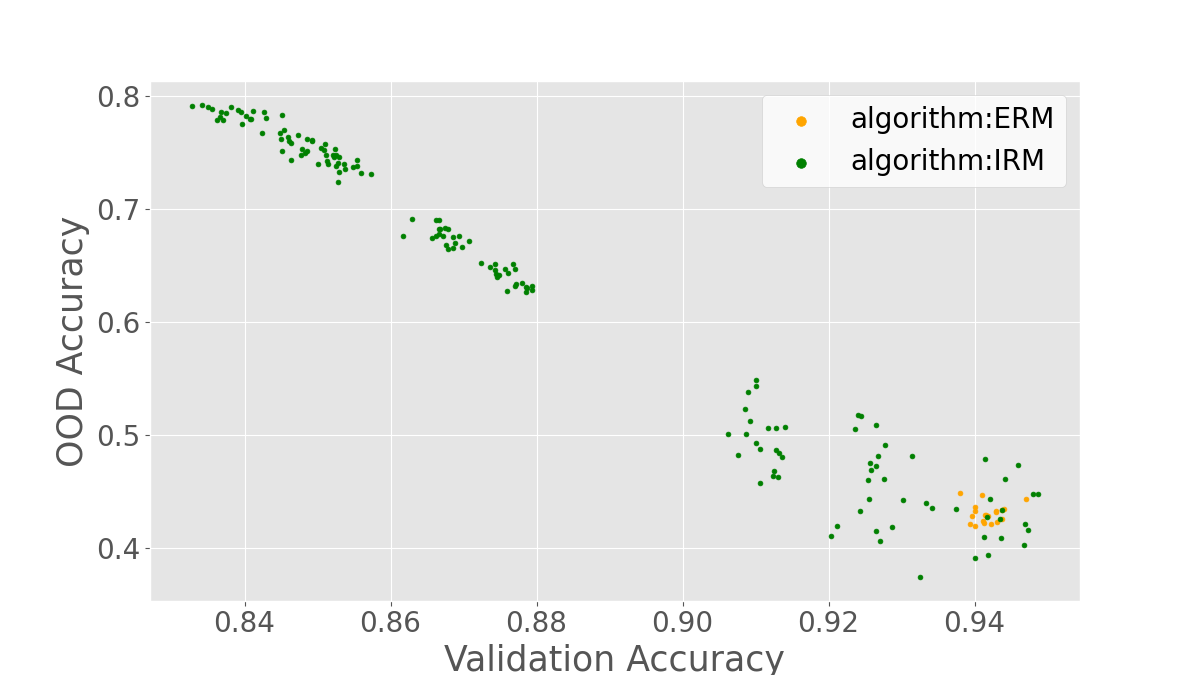}
	   \vspace{-50pt}
	\end{minipage}}
 \hfill 	
  \subfloat[Variation and OOD accuracy. They are \emph{negative} correlated, i.e., low variation leads to high OOD accuracy.\label{fig_DISOOD}]{
	\begin{minipage}[c][1\width]{
	   0.45\textwidth}
	   \centering
	   \includegraphics[width=1.1\textwidth]{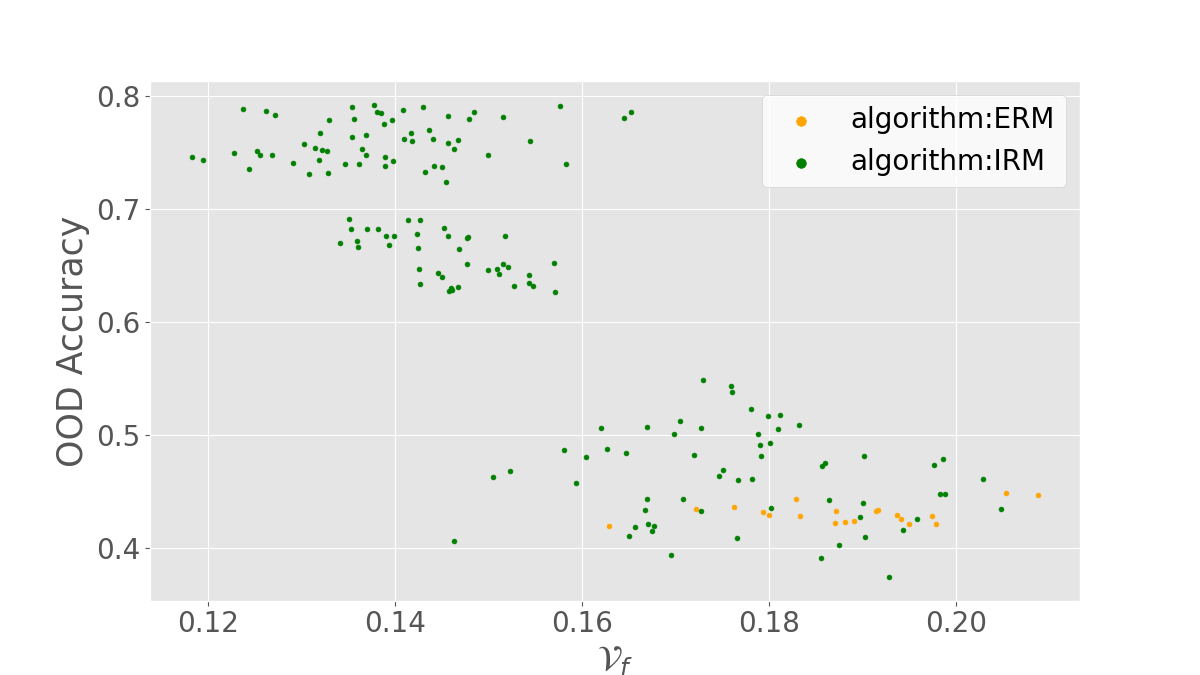}
	   \vspace{-50pt}
	\end{minipage}}
\caption{Experiment Result on Colored MNIST}
\end{figure}

We conduct experiments on the Colored MNIST dataset. As Figure~\ref{fig_VALOOD} shows, validation accuracy on Colored MNIST has a negative relation with OOD accuracy. Therefore, using validation accuracy as a metric to select will result in a poor OOD accuracy. By contrast, the correlation between variation OOD accuracy is also negative, meaning that the smaller the variation is, the higher the OOD accuracy will be.

\section{Appendix: Experiment Details}\label{appsec6}

In this section we list our experiment details. We finish all of our experiment on 8 RTX3090 GPUs and 12 RTX2080 GPUs. It costs over 14,400 GPU hours.

 \paragraph{Architecture \& Dataset}
 We use ResNet50 as our model architecture. The network except last linear and softmax layer is regarded as feature extractor $h(x)$ where the feature dimension $d=2048$. We train our model on three real world OOD datasets (PACS \cite{li2017deeper}, VLCS  \cite{torralba2011unbiased}, OfficeHome \cite{venkateswara2017deep}) by different algorithms and hyperparameters, and collect those models for selection procedure. Both datasets have 4 different environments. For each environment, we split it into 20\% and 80\% splits. The large part is used for training and OOD test. The small part is used for validation.  We compare our criterion with validation criterion on each environment. We use Adam as our optimizer and weight decay is set to zero.  
 
 \paragraph{Data Augmentation}
 Data augmentation is an important method for domain generalization problem. In our experiment, we follow same data augmentation setting in \cite{gulrajani2020search}. We first crops of random size and aspect ratio,
resizing to 224 × 224 pixels, then we do random horizontal flips and color jitter. We also grayscale the image with
10\% probability, and normalize image with the ImageNet channel means and standard deviations.

\paragraph{Hyparameters \& Algorithm}
We search ERM \cite{vapnik1992principles} and four common OOD algorithms (Inter-domain Mixup \cite{yan2020improve}, Group DRO \cite{sagawa2019distributionally}, CORAL \cite{sun2016deep} and IRM \cite{arjovsky2019invariant}). Specific hyper-parameters are listed in Table~\ref{algorithm penalty}. We train each setting for 5 times. 
\begin{table}[t]
    \centering
    \caption{Algorithm specific hyperparmeter choice}
    \begin{tabular}{c| c|c|c| c| c}
    
    \hline\hline
         Algorithms &ERM&CORAL&GroupDRO&Mixup&IRM \\
         \hline
         Penalty&-&$\lambda$=1,0.1,0.01 & $\eta$=0.1,0.01&$\alpha$=0.1,0.2&iter=1000,$\lambda$=1,10\\
         \hline
         lr & \multicolumn{5}{c}{1e-4,5e-5} \\ 
         \hline
         steps & \multicolumn{5}{c}{2500,5000}\\
         \hline\hline
    \end{tabular}
    
    \label{algorithm penalty}
\end{table}

\paragraph{Baseline}
The performance of ``Val'' method is similar to another accuracy-based selection as is shown in \cite{gulrajani2020search}. We compare this method with ours.

\bibliographystyle{alpha}
\bibliography{sample}

\newcommand{\etalchar}[1]{$^{#1}$}
\begin{thebibliography}{MTRAR{\etalchar{+}}12}

\bibitem[ABGLP19]{arjovsky2019invariant}
Martin Arjovsky, L{\'e}on Bottou, Ishaan Gulrajani, and David Lopez-Paz.
\newblock Invariant risk minimization.
\newblock {\em arXiv preprint arXiv:1907.02893}, 2019.

\bibitem[Ald89]{aldrich1989autonomy}
John Aldrich.
\newblock Autonomy.
\newblock {\em Oxford Economic Papers}, 41(1):15--34, 1989.

\bibitem[AMFM19]{albuquerque2019adversarial}
Isabela Albuquerque, Jo{\~a}o Monteiro, Tiago~H Falk, and Ioannis Mitliagkas.
\newblock Adversarial target-invariant representation learning for domain
  generalization.
\newblock {\em arXiv preprint arXiv:1911.00804}, 2019.

\bibitem[ASVD20]{ahuja2020invariant}
Kartik Ahuja, Karthikeyan Shanmugam, Kush Varshney, and Amit Dhurandhar.
\newblock Invariant risk minimization games.
\newblock {\em arXiv preprint arXiv:2002.04692}, 2020.

\bibitem[AWD{\etalchar{+}}20]{ahuja2020empirical}
Kartik Ahuja, Jun Wang, Amit Dhurandhar, Karthikeyan Shanmugam, and Kush~R
  Varshney.
\newblock Empirical or invariant risk minimization? a sample complexity
  perspective.
\newblock {\em arXiv preprint arXiv:2010.16412}, 2020.

\bibitem[Bag05]{bagnell2005robust}
J~Andrew Bagnell.
\newblock Robust supervised learning.
\newblock In {\em Proceedings of the national conference on artificial
  intelligence}, volume~20, page 714. Menlo Park, CA; Cambridge, MA; London;
  AAAI Press; MIT Press; 1999, 2005.

\bibitem[BDBC{\etalchar{+}}07]{ben2007analysis}
Shai Ben-David, John Blitzer, Koby Crammer, Fernando Pereira, et~al.
\newblock Analysis of representations for domain adaptation.
\newblock {\em Advances in neural information processing systems}, 19:137,
  2007.

\bibitem[BDBC{\etalchar{+}}10]{ben2010theory}
Shai Ben-David, John Blitzer, Koby Crammer, Alex Kulesza, Fernando Pereira, and
  Jennifer~Wortman Vaughan.
\newblock A theory of learning from different domains.
\newblock {\em Machine learning}, 79(1):151--175, 2010.

\bibitem[BDR{\etalchar{+}}19]{bengio2019meta}
Yoshua Bengio, Tristan Deleu, Nasim Rahaman, Nan~Rosemary Ke, Sebastien
  Lachapelle, Olexa Bilaniuk, Anirudh Goyal, and Christopher Pal.
\newblock A meta-transfer objective for learning to disentangle causal
  mechanisms.
\newblock In {\em International Conference on Learning Representations}, 2019.

\bibitem[BKMZ19]{blanchet2019data}
Jose Blanchet, Yang Kang, Karthyek Murthy, and Fan Zhang.
\newblock Data-driven optimal transport cost selection for distributionally
  robust optimization.
\newblock In {\em 2019 Winter Simulation Conference (WSC)}, pages 3740--3751.
  IEEE, 2019.

\bibitem[BLS11]{blanchard2011generalizing}
Gilles Blanchard, Gyemin Lee, and Clayton Scott.
\newblock Generalizing from several related classification tasks to a new
  unlabeled sample.
\newblock {\em Advances in neural information processing systems},
  24:2178--2186, 2011.

\bibitem[BSH{\etalchar{+}}20]{bai2020decaug}
Haoyue Bai, Rui Sun, Lanqing Hong, Fengwei Zhou, Nanyang Ye, Han-Jia Ye,
  S-H~Gary Chan, and Zhenguo Li.
\newblock Decaug: Out-of-distribution generalization via decomposed feature
  representation and semantic augmentation.
\newblock {\em arXiv preprint arXiv:2012.09382}, 2020.

\bibitem[BVHP18]{Beery_2018_ECCV}
Sara Beery, Grant Van~Horn, and Pietro Perona.
\newblock Recognition in terra incognita.
\newblock In {\em Proceedings of the European Conference on Computer Vision
  (ECCV)}, September 2018.

\bibitem[Cav00]{cavalier2000efficient}
Laurent Cavalier.
\newblock Efficient estimation of a density in a problem of tomography.
\newblock {\em Annals of Statistics}, pages 630--647, 2000.

\bibitem[CJZ21]{creager2021exchanging}
Elliot Creager, Joern-Henrik Jacobsen, and Richard Zemel.
\newblock Exchanging lessons between algorithmic fairness and domain
  generalization, 2021.

\bibitem[CZYJ20]{chang2020invariant}
Shiyu Chang, Yang Zhang, Mo~Yu, and Tommi Jaakkola.
\newblock Invariant rationalization.
\newblock In {\em International Conference on Machine Learning}, pages
  1448--1458. PMLR, 2020.

\bibitem[DJL20]{degrave2020ai}
Alex~J DeGrave, Joseph~D Janizek, and Su-In Lee.
\newblock Ai for radiographic covid-19 detection selects shortcuts over signal.
\newblock {\em medRxiv}, 2020.

\bibitem[DLS{\etalchar{+}}19]{deshmukh2019generalization}
Aniket~Anand Deshmukh, Yunwen Lei, Srinagesh Sharma, Urun Dogan, James~W
  Cutler, and Clayton Scott.
\newblock A generalization error bound for multi-class domain generalization.
\newblock {\em arXiv preprint arXiv:1905.10392}, 2019.

\bibitem[EK18]{esfahani2018data}
Peyman~Mohajerin Esfahani and Daniel Kuhn.
\newblock Data-driven distributionally robust optimization using the
  wasserstein metric: Performance guarantees and tractable reformulations.
\newblock {\em Mathematical Programming}, 171(1-2):115--166, 2018.

\bibitem[GLP20]{gulrajani2020search}
Ishaan Gulrajani and David Lopez-Paz.
\newblock In search of lost domain generalization.
\newblock {\em arXiv preprint arXiv:2007.01434}, 2020.

\bibitem[GUA{\etalchar{+}}16]{ganin2016domain}
Yaroslav Ganin, Evgeniya Ustinova, Hana Ajakan, Pascal Germain, Hugo
  Larochelle, Fran{\c{c}}ois Laviolette, Mario Marchand, and Victor Lempitsky.
\newblock Domain-adversarial training of neural networks.
\newblock {\em The journal of machine learning research}, 17(1):2096--2030,
  2016.

\bibitem[Haa44]{haavelmo1944probability}
Trygve Haavelmo.
\newblock The probability approach in econometrics.
\newblock {\em Econometrica: Journal of the Econometric Society}, pages
  iii--115, 1944.

\bibitem[HDM21]{heinze2021conditional}
Christina Heinze-Deml and Nicolai Meinshausen.
\newblock Conditional variance penalties and domain shift robustness.
\newblock {\em Machine Learning}, 110(2):303--348, 2021.

\bibitem[HH13]{hu2013kullback}
Zhaolin Hu and L~Jeff Hong.
\newblock Kullback-leibler divergence constrained distributionally robust
  optimization.
\newblock {\em Available at Optimization Online}, 2013.

\bibitem[HZRS16]{He_2016_CVPR}
Kaiming He, Xiangyu Zhang, Shaoqing Ren, and Jian Sun.
\newblock Deep residual learning for image recognition.
\newblock In {\em Proceedings of the IEEE Conference on Computer Vision and
  Pattern Recognition (CVPR)}, June 2016.

\bibitem[JBJ20]{jin2020domain}
Wengong Jin, Regina Barzilay, and Tommi Jaakkola.
\newblock Domain extrapolation via regret minimization.
\newblock {\em arXiv preprint arXiv:2006.03908}, 2020.

\bibitem[JBJ21]{jin2021enforcing}
Wengong Jin, Regina Barzilay, and Tommi~S. Jaakkola.
\newblock Enforcing predictive invariance across structured biomedical domains,
  2021.

\bibitem[JSR19]{johansson2019support}
Fredrik~D Johansson, David Sontag, and Rajesh Ranganath.
\newblock Support and invertibility in domain-invariant representations.
\newblock In {\em The 22nd International Conference on Artificial Intelligence
  and Statistics}, pages 527--536. PMLR, 2019.

\bibitem[KCJ{\etalchar{+}}20]{krueger2020out}
David Krueger, Ethan Caballero, Joern-Henrik Jacobsen, Amy Zhang, Jonathan
  Binas, Remi~Le Priol, and Aaron Courville.
\newblock Out-of-distribution generalization via risk extrapolation (rex).
\newblock {\em arXiv preprint arXiv:2003.00688}, 2020.

\bibitem[KSM{\etalchar{+}}20]{koh2020wilds}
Pang~Wei Koh, Shiori Sagawa, Henrik Marklund, Sang~Michael Xie, Marvin Zhang,
  Akshay Balsubramani, Weihua Hu, Michihiro Yasunaga, Richard~Lanas Phillips,
  Sara Beery, et~al.
\newblock Wilds: A benchmark of in-the-wild distribution shifts.
\newblock {\em arXiv preprint arXiv:2012.07421}, 2020.

\bibitem[KT12]{korostelev2012minimax}
Aleksandr~Petrovich Korostelev and Alexandre~B Tsybakov.
\newblock {\em Minimax theory of image reconstruction}, volume~82.
\newblock Springer Science \& Business Media, 2012.

\bibitem[KTSS21]{kamath2021does}
Pritish Kamath, Akilesh Tangella, Danica~J Sutherland, and Nathan Srebro.
\newblock Does invariant risk minimization capture invariance?
\newblock {\em arXiv preprint arXiv:2101.01134}, 2021.

\bibitem[KY20]{koyama2020out}
Masanori Koyama and Shoichiro Yamaguchi.
\newblock Out-of-distribution generalization with maximal invariant predictor.
\newblock {\em arXiv preprint arXiv:2008.01883}, 2020.

\bibitem[LBBH98]{lecun1998gradient}
Yann LeCun, L{\'e}on Bottou, Yoshua Bengio, and Patrick Haffner.
\newblock Gradient-based learning applied to document recognition.
\newblock {\em Proceedings of the IEEE}, 86(11):2278--2324, 1998.

\bibitem[LPWK18]{li2018domain}
Haoliang Li, Sinno~Jialin Pan, Shiqi Wang, and Alex~C Kot.
\newblock Domain generalization with adversarial feature learning.
\newblock In {\em Proceedings of the IEEE Conference on Computer Vision and
  Pattern Recognition}, pages 5400--5409, 2018.

\bibitem[LYSH17]{li2017deeper}
Da~Li, Yongxin Yang, Yi-Zhe Song, and Timothy~M Hospedales.
\newblock Deeper, broader and artier domain generalization.
\newblock In {\em Proceedings of the IEEE international conference on computer
  vision}, pages 5542--5550, 2017.

\bibitem[MBS13]{muandet2013domain}
Krikamol Muandet, David Balduzzi, and Bernhard Sch{\"o}lkopf.
\newblock Domain generalization via invariant feature representation.
\newblock In {\em International Conference on Machine Learning}, pages 10--18,
  2013.

\bibitem[Mei18]{meinshausen2018causality}
Nicolai Meinshausen.
\newblock Causality from a distributional robustness point of view.
\newblock In {\em 2018 IEEE Data Science Workshop (DSW)}, pages 6--10. IEEE,
  2018.

\bibitem[MSA{\etalchar{+}}20]{muller2020learning}
Jens M{\"u}ller, Robert Schmier, Lynton Ardizzone, Carsten Rother, and Ullrich
  K{\"o}the.
\newblock Learning robust models using the principle of independent causal
  mechanisms.
\newblock {\em arXiv preprint arXiv:2010.07167}, 2020.

\bibitem[MTRAR{\etalchar{+}}12]{moreno2012unifying}
Jose~G Moreno-Torres, Troy Raeder, Roc{\'\i}o Alaiz-Rodr{\'\i}guez, Nitesh~V
  Chawla, and Francisco Herrera.
\newblock A unifying view on dataset shift in classification.
\newblock {\em Pattern recognition}, 45(1):521--530, 2012.

\bibitem[MvOC{\etalchar{+}}17]{magliacane2017domain}
Sara Magliacane, Thijs van Ommen, Tom Claassen, Stephan Bongers, Philip
  Versteeg, and Joris~M Mooij.
\newblock Domain adaptation by using causal inference to predict invariant
  conditional distributions.
\newblock {\em arXiv preprint arXiv:1707.06422}, 2017.

\bibitem[NAN20]{nagarajan2020understanding}
Vaishnavh Nagarajan, Anders Andreassen, and Behnam Neyshabur.
\newblock Understanding the failure modes of out-of-distribution
  generalization.
\newblock {\em arXiv preprint arXiv:2010.15775}, 2020.

\bibitem[Nat01]{natterer2001mathematics}
Frank Natterer.
\newblock {\em The mathematics of computerized tomography}.
\newblock SIAM, 2001.

\bibitem[PBM16]{peters2016causal}
Jonas Peters, Peter B{\"u}hlmann, and Nicolai Meinshausen.
\newblock Causal inference by using invariant prediction: identification and
  confidence intervals.
\newblock {\em Journal of the Royal Statistical Society. Series B (Statistical
  Methodology)}, pages 947--1012, 2016.

\bibitem[Pea09]{pearl2009causality}
Judea Pearl.
\newblock {\em Causality}.
\newblock Cambridge university press, 2009.

\bibitem[PJS17]{peters2017elements}
Jonas Peters, Dominik Janzing, and Bernhard Sch{\"o}lkopf.
\newblock {\em Elements of causal inference: foundations and learning
  algorithms}.
\newblock The MIT Press, 2017.

\bibitem[PNO{\etalchar{+}}20]{parascandolo2020learning}
Giambattista Parascandolo, Alexander Neitz, Antonio Orvieto, Luigi Gresele, and
  Bernhard Sch{\"o}lkopf.
\newblock Learning explanations that are hard to vary.
\newblock {\em arXiv preprint arXiv:2009.00329}, 2020.

\bibitem[PWP{\etalchar{+}}19]{pfister2019stabilizing}
Niklas Pfister, Evan~G Williams, Jonas Peters, Ruedi Aebersold, and Peter
  B{\"u}hlmann.
\newblock Stabilizing variable selection and regression.
\newblock {\em arXiv preprint arXiv:1911.01850}, 2019.

\bibitem[RCSTP18]{rojas2018invariant}
Mateo Rojas-Carulla, Bernhard Sch{\"o}lkopf, Richard Turner, and Jonas Peters.
\newblock Invariant models for causal transfer learning.
\newblock {\em The Journal of Machine Learning Research}, 19(1):1309--1342,
  2018.

\bibitem[RRR20]{rosenfeld2020risks}
Elan Rosenfeld, Pradeep Ravikumar, and Andrej Risteski.
\newblock The risks of invariant risk minimization.
\newblock {\em arXiv preprint arXiv:2010.05761}, 2020.

\bibitem[RRSS19]{recht2019imagenet}
Benjamin Recht, Rebecca Roelofs, Ludwig Schmidt, and Vaishaal Shankar.
\newblock Do imagenet classifiers generalize to imagenet?
\newblock In {\em International Conference on Machine Learning}, pages
  5389--5400. PMLR, 2019.

\bibitem[SAMEK15]{shafieezadeh2015distributionally}
Soroosh Shafieezadeh~Abadeh, Peyman~M Mohajerin~Esfahani, and Daniel Kuhn.
\newblock Distributionally robust logistic regression.
\newblock {\em Advances in Neural Information Processing Systems},
  28:1576--1584, 2015.

\bibitem[SJP{\etalchar{+}}12]{scholkopf2012causal}
Bernhard Sch{\"o}lkopf, Dominik Janzing, Jonas Peters, Eleni Sgouritsa, Kun
  Zhang, and Joris Mooij.
\newblock On causal and anticausal learning.
\newblock {\em arXiv preprint arXiv:1206.6471}, 2012.

\bibitem[SKHL19]{sagawa2019distributionally}
Shiori Sagawa, Pang~Wei Koh, Tatsunori~B Hashimoto, and Percy Liang.
\newblock Distributionally robust neural networks.
\newblock In {\em International Conference on Learning Representations}, 2019.

\bibitem[SND18]{sinha2018certifying}
Aman Sinha, Hongseok Namkoong, and John Duchi.
\newblock Certifying some distributional robustness with principled adversarial
  training.
\newblock In {\em International Conference on Learning Representations}, 2018.

\bibitem[SS16]{sun2016deep}
Baochen Sun and Kate Saenko.
\newblock Deep coral: Correlation alignment for deep domain adaptation.
\newblock In {\em European conference on computer vision}, pages 443--450.
  Springer, 2016.

\bibitem[TDS{\etalchar{+}}20]{taori2020measuring}
Rohan Taori, Achal Dave, Vaishaal Shankar, Nicholas Carlini, Benjamin Recht,
  and Ludwig Schmidt.
\newblock Measuring robustness to natural distribution shifts in image
  classification.
\newblock {\em Advances in Neural Information Processing Systems}, 33, 2020.

\bibitem[TE11]{torralba2011unbiased}
Antonio Torralba and Alexei~A Efros.
\newblock Unbiased look at dataset bias.
\newblock In {\em CVPR 2011}, pages 1521--1528. IEEE, 2011.

\bibitem[Vap92]{vapnik1992principles}
Vladimir Vapnik.
\newblock Principles of risk minimization for learning theory.
\newblock In {\em Advances in neural information processing systems}, pages
  831--838, 1992.

\bibitem[VECP17]{venkateswara2017deep}
Hemanth Venkateswara, Jose Eusebio, Shayok Chakraborty, and Sethuraman
  Panchanathan.
\newblock Deep hashing network for unsupervised domain adaptation.
\newblock In {\em Proceedings of the IEEE Conference on Computer Vision and
  Pattern Recognition}, pages 5018--5027, 2017.

\bibitem[WLL{\etalchar{+}}21]{wang2021generalizing}
Jindong Wang, Cuiling Lan, Chang Liu, Yidong Ouyang, Wenjun Zeng, and Tao Qin.
\newblock Generalizing to unseen domains: A survey on domain generalization.
\newblock {\em arXiv preprint arXiv:2103.03097}, 2021.

\bibitem[XCLL20]{xie2020risk}
Chuanlong Xie, Fei Chen, Yue Liu, and Zhenguo Li.
\newblock Risk variance penalization.
\newblock {\em arXiv preprint arXiv:2006.07544}, 2020.

\bibitem[YSL{\etalchar{+}}20]{yan2020improve}
Shen Yan, Huan Song, Nanxiang Li, Lincan Zou, and Liu Ren.
\newblock Improve unsupervised domain adaptation with mixup training.
\newblock {\em arXiv preprint arXiv:2001.00677}, 2020.

\bibitem[YXLL21]{ye2021out}
Haotian Ye, Chuanlong Xie, Yue Liu, and Zhenguo Li.
\newblock Out-of-distribution generalization analysis via influence function.
\newblock {\em arXiv preprint arXiv:2101.08521}, 2021.

\bibitem[ZLQ{\etalchar{+}}21]{zhou2021domain}
Kaiyang Zhou, Ziwei Liu, Yu~Qiao, Tao Xiang, and Chen~Change Loy.
\newblock Domain generalization: A survey.
\newblock {\em arXiv preprint arXiv:2103.02503}, 2021.

\end{thebibliography}

\end{document}